\documentclass[conference]{IEEEtran}
\usepackage{times}
\usepackage{graphicx}

\usepackage[numbers]{natbib}
\usepackage{amsmath}
\usepackage{amssymb}
\usepackage{xcolor}
\usepackage{booktabs}
\usepackage{multirow}
\usepackage{dblfloatfix}

\usepackage{amsthm}
\let\labelindent\relax
\usepackage{enumitem}
\usepackage{needspace}
\usepackage[bookmarks=true]{hyperref}
\usepackage{etoolbox}
\AtBeginEnvironment{thebibliography}{\interlinepenalty=10000\relax}

\usepackage[switch*]{lineno}
\newcommand*\patchAmsMathEnvironmentForLineno[1]{%
  \expandafter\let\csname old#1\expandafter\endcsname\csname #1\endcsname
  \expandafter\let\csname oldend#1\expandafter\endcsname\csname end#1\endcsname
  \renewenvironment{#1}%
    {\linenomath\csname old#1\endcsname}%
    {\csname oldend#1\endcsname\endlinenomath}}
\newcommand*\patchBothAmsMathEnvironmentsForLineno[1]{%
  \patchAmsMathEnvironmentForLineno{#1}%
  \patchAmsMathEnvironmentForLineno{#1*}}
\AtBeginDocument{%
  \patchBothAmsMathEnvironmentsForLineno{equation}%
  \patchBothAmsMathEnvironmentsForLineno{align}%
  \patchBothAmsMathEnvironmentsForLineno{gather}%
  \patchBothAmsMathEnvironmentsForLineno{multline}%
}
\newcommand{\appautoref}[1]{%
  \begingroup
  \renewcommand{\sectionautorefname}{Appendix}%
  \autoref{#1}%
  \endgroup
}

\newtheorem{proposition}{Proposition}
\definecolor{approxred}{RGB}{151,72,45}

\newcommand{\E}{\mathbb{E}}
\newcommand{\Expect}[1]{\mathbb{E}_{#1}}

\begin{document}

\title{DIA: Denoising Intermediate Advantage \\for Diffusion Policy Optimization}

\author{\authorblockN{Arjun Sohal\textsuperscript{1},
Yuchi Zhao\textsuperscript{1,2},
Miroslav Bogdanovic\textsuperscript{1,2,3} and
Al\'an Aspuru-Guzik\textsuperscript{1,2,3,4,5}}

\authorblockA{\textsuperscript{1}University of Toronto, Toronto, ON, Canada}
\authorblockA{\textsuperscript{2}Vector Institute for Artificial Intelligence, Toronto, ON, Canada}
\authorblockA{\textsuperscript{3}Acceleration Consortium, University of Toronto, Toronto, ON, Canada}
\authorblockA{\textsuperscript{4}Canadian Institute for Advanced Research (CIFAR), Toronto, ON, Canada}
\authorblockA{\textsuperscript{5}NVIDIA, Toronto, ON, Canada}}

\maketitle

\begin{abstract}

Diffusion-based robot policies have become widely used in robotic manipulation, where they are typically trained with behavior cloning. However, policies trained purely from demonstrations are limited by the quality and coverage of the available data. Reinforcement learning can further improve the performance of these pretrained policies through interaction. A common approach is to use policy-gradient methods that formulate diffusion-policy fine-tuning as an outer environment MDP together with an inner denoising MDP. However, existing methods typically assign the same environment-level credit to all denoising steps used to construct an action chunk, without distinguishing which intermediate decisions contributed most to the final return. We introduce \emph{Denoising Intermediate Advantage} (DIA), a policy-gradient method that learns a value function over partially denoised actions and uses it to construct a denoising-level advantage for each step of the generative process. DIA combines this inner credit signal with the standard environment-level PPO advantage, providing state-dependent credit throughout the denoising chain. Across Robomimic, FurnitureBench, Franka Kitchen, and D3IL, DIA consistently improves final performance over existing diffusion-policy fine-tuning methods. Beyond final reward, DIA reaches successful states more efficiently and can shift farther from the pretrained behavior distribution, enabling it to discover more effective and efficient task-level strategies and subtask sequences that baseline methods fail to reach. 

\end{abstract}

\IEEEpeerreviewmaketitle

\section{Introduction}

Diffusion-based policies have become a common approach for behavior-cloning (BC) pre-training in robotic manipulation, as their iterative denoising process can represent the multimodal action distributions present in demonstration data~\cite{chi2023diffusion}. However, BC policies remain constrained by the quality and coverage of the demonstrations on which they are trained. To further improve policy performance, recent work has fine-tuned pre-trained diffusion policies with reinforcement learning (RL), allowing the policy to improve through interaction with the environment. In particular, policy-gradient methods treat the denoising process as a sequential decision process, enabling each denoising transition to be optimized using task reward~\cite{ren2025diffusion}.

\begin{figure}[tb]
\centering
\includegraphics[width=1.04\columnwidth]{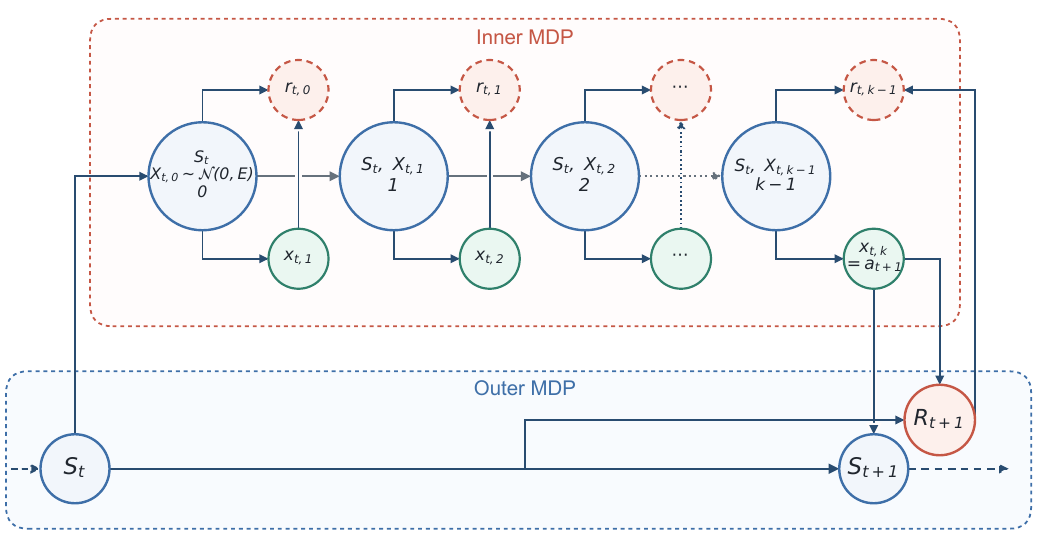}
\caption{Two-level MDP formulation for fine-tuning a diffusion policy. At each outer state $S_t$, the inner MDP generates the action $a_{t+1}$ through iterative denoising from $X_{t,0}$, where the inner reward is zero, i.e., $r_{t,k}=0$. The resulting action induces the transition to $S_{t+1}$ and produces the outer reward $R_{t+1}$, which provides feedback for policy optimization.}
\label{fig:mdp}
\end{figure}

Unlike single-step policies, diffusion policies generate actions through a sequence of stochastic denoising steps, making credit assignment and stable policy-gradient optimization more challenging during RL fine-tuning. To tackle this, one common approach is to formulated through a two-level Markov decision process (MDP) as shown in Figure~\ref{fig:mdp}: an outer MDP over environment states and an inner generative MDP over the denoising process that constructs each action chunk \cite{Zhao-RSS-23}. However, while these methods optimize the individual transitions of the inner MDP, they define value only at the environment level. The resulting environment-level advantage is therefore used to update all denoising decisions that produced the executed action, with DPPO additionally applying a fixed weighting based on denoising depth~\cite{ren2025diffusion} as a heuristic. This weighting varies with the position in the denoising chain, but not with the realized intermediate denoising state. We argue that an intermediate denoised action contains information about the action toward which the policy is committing, and this information changes throughout generation. Treating all intermediate decisions using only an environment-level signal therefore leaves the structure of the denoising trajectory itself unused for credit assignment.

We propose \emph{Denoising Intermediate Advantage} (DIA), a policy-gradient method for fine-tuning diffusion-based robot policies. Specifically, we introduce an inner advantage derived from a value function over partially denoised actions, allowing the policy update to depend on the realized intermediate state of the denoising process. Across Robomimic, FurnitureBench, Franka Kitchen, and D3IL, DIA consistently improves task performance over existing diffusion-policy fine-tuning methods, with the largest gains on more challenging long-horizon tasks. These improvements hold for both state- and image-based policies. Beyond reward and success rate, DIA reaches successful states more efficiently, shifts further from the pre-trained behavior distribution, and composes demonstrated subtask behaviors into successful task-level sequences that baseline methods fail to discover. Our main contributions are: 
\begin{itemize}
    \item We introduce DIA, a policy-gradient method that learns values over
    intermediate denoising states and uses them to construct state-dependent
    credit for individual denoising decisions.

    \item We derive the corresponding denoising-level advantage and show how it
    can be combined with the standard environment-level policy-gradient
    estimator for diffusion-policy fine-tuning.

    \item Across four benchmarks, we show that DIA improves reward and
    task success while also producing more efficient executions and discovering
    successful behaviors further from pre-trained policies.
\end{itemize}

\section{Related Work}
\label{sec:related}

\subsection{Policy gradients through the generative chain}
These methods apply the policy gradient theorem to the optimization of generative
policies, treating each denoising or integration step as a decision of an extended MDP. DDPO~\cite{black2024training}
casts the $K$ denoising steps of a diffusion model as a multi-step MDP and fine-tunes it
with policy gradients against a non-differentiable reward, and DPOK~\cite{fan2023reinforcement}
does the same with a KL penalty toward the pre-trained model; both target text-to-image
generation, where one scalar is assigned to the finished image.
Flow-GRPO~\cite{liu2025flowgrpo} converts the deterministic generation ODE into an
equivalent stochastic differential equation so that flow models can be optimized with
group-relative policy optimization, replacing a learned value with a baseline computed
across a group of samples.

For control, DPPO~\cite{ren2025diffusion} nests the denoising chain inside the environment
MDP. Every denoising transition is Gaussian, so its log-likelihood is available in closed
form and PPO~\cite{schulman2017proximalpolicyoptimizationalgorithms} applies through the whole chain.
ReinFlow~\cite{zhang2025reinflowfinetuningflowmatching} extends the same treatment to flow policies, whose
generation map is deterministic and whose per-step likelihood is therefore degenerate, by
learning a noise-injection network that turns each integration step into a Gaussian
transition. Flow Policy Gradients~\cite{yi2026flowpolicygradientsrobot} instead trains flow policies while
bypassing explicit likelihoods for the flow map.
ResiP~\cite{11127442} freezes the diffusion policy entirely and trains a
Gaussian residual on top of its actions with PPO, applying the policy gradient beside
the generative chain rather than through it.

These methods share the same source of learning signal. The value function is defined
over environment states alone; the resulting advantage is estimated at the environment
level with GAE~\cite{schulman2018highdimensionalcontinuouscontrolusing} and is then applied to every denoising decision of
that environment step, in DPPO up to a fixed geometric depth schedule that attenuates the
early decisions rather than in proportion to their contribution.
In the image-generation setting a single terminal reward is shared by the entire chain.

\subsection{$Q$-based fine-tuning of diffusion policies}
These methods primarily function through a learned Q critic, and using that critic to guide the generation of actions towards higher value ones.
DQL~\cite{wang2023diffusion} backpropagates a learned $Q$ through the sampler,
QSM~\cite{3692070.3693742} matches the policy score to $\nabla_a Q$,
DIPO~\cite{yang2023policyrepresentationdiffusionprobability} ascends $Q$ in action space and refits the diffusion model to
the improved actions, and IDQL~\cite{hansenestruch2023idql} draws candidate actions and
reweights them by an implicit value.
The same recipe carries over to flow policies: FQL~\cite{pmlr-v267-park25f} trains a one-step
policy distilled from a flow model against a learned $Q$, rather than guiding the
iterative flow directly, which avoids backpropagating through the integration;
QC~\cite{li2025reinforcement} learns values over multi-step action chunks.

Q-Flow~\cite{doo2026qflow} assigns a value to the intermediate generative states
rather than to the completed action alone. It exploits the deterministic flow map to
propagate a terminal trajectory value back to the intermediate latents along the
policy-induced flow, in place of differentiating through the numerical solver, and uses
that value to guide generation toward higher-value actions.

These methods assign credit to actions through a learned $Q$, but the generative
process that produced the action is treated as a fixed sampler: the learning signal
reaches it only through the sampled endpoint, or through differentiation of the sampler
as a whole, and never as credit attributed to an individual generative decision.

\section{Preliminaries}
\label{sec:prelim}

\subsection{Policy-gradient reinforcement learning}
We consider a discounted MDP with state $s_t$, action
$a_t$, reward $r_t$, and discount $\gamma$, controlled by a stochastic policy
$\pi_\theta$. The return from step $t$ is $g_t=\sum_{j\geq0}\gamma^j r_{t+j}$,
the objective is $J(\theta)=\mathbb{E}\bigl[\sum_{t\geq0}\gamma^t r_t\bigr]$,
and the state and action values are $V(s_t)=\mathbb{E}[g_t\mid s_t]$ and
$Q(s_t,a_t)=\mathbb{E}[g_t\mid s_t,a_t]$.
The \emph{policy gradient theorem}~\cite{sutton1999policygradient} writes the
gradient of the objective as an expectation of per-decision score terms weighted by
the return,
\begin{equation}
\nabla_\theta J(\theta) = \mathbb{E}\Bigl[\textstyle\sum_{t\geq0}\gamma^t\,
   \nabla_\theta \log \pi_\theta(a_t\mid s_t)\, g_t\Bigr].
\label{eq:pg}
\end{equation}

Actor--critic methods weight each score term by the \emph{advantage}
$A(s_t,a_t)=Q(s_t,a_t)-V(s_t)$ rather than by the return $g_t$, the subtracted
$V(s_t)$ acting as a variance-reducing \emph{baseline}.
Generalized advantage estimation~\cite{schulman2018highdimensionalcontinuouscontrolusing} (GAE) estimates $A$ from
a learned critic $\widehat V$ as an exponentially weighted sum of
temporal-difference residuals
$\delta_t = r_t + \gamma\widehat V(s_{t+1}) - \widehat V(s_t)$,
\begin{equation}
\widehat A_t^{\lambda} = \sum_{\ell\geq0}(\gamma\lambda)^{\ell}\delta_{t+\ell}
  = \delta_t + \gamma\lambda\,\widehat A_{t+1}^{\lambda},
\label{eq:gae}
\end{equation}
with $\lambda$ trading the critic's bias at $\lambda=0$ against rollout variance
at $\lambda=1$.

\subsection{Diffusion policies}
A diffusion model~\cite{3495724.3496298} draws a sample by starting from
Gaussian noise and applying a learned sequence of $K$ stochastic transitions,
each of which slightly denoises the current latent. Each transition is Gaussian,
with a mean produced by the denoising network and a variance fixed by the
sampler, so its log-density is available in closed form even though the
log-density of the endpoint marginal is not. A diffusion
policy~\cite{chi2023diffusion} uses this process as a continuous control policy:
conditioned on an observation $s$, the chain denoises noise into an action, or a
short chunk of actions, so that $\pi_\theta(a\mid s)$ is the marginal of the
conditioned chain over its endpoint. Representing the action distribution this
way lets a single policy capture the multimodal behavior found in expert
demonstrations, which unimodal Gaussian policies cannot.

\begin{figure*}[!tb]
    \centering
    \includegraphics[width=0.9\textwidth]{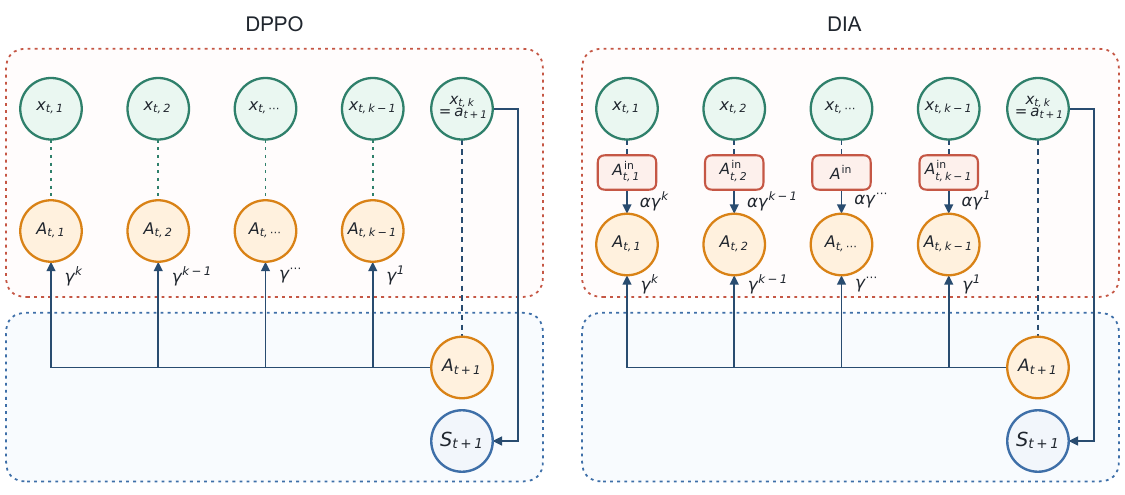}
   \caption{Comparison of denoising-step credit assignment in DPPO and DIA. DPPO applies a shared environment-level advantage with fixed depth-dependent weighting. DIA uses a learned inner value function, bootstrapped by the terminal action value $\widehat Q(s_t,a_t)$, to compute a state-dependent inner advantage $\widehat A_{t,k}^{\mathrm{in}}$, which is combined with the outer advantage using coefficient $\alpha$.}
    \label{fig:dppo_dia}
\end{figure*}

\section{Denoising Intermediate Advantage}
\label{sec:method}

We propose DIA, a policy gradient method for diffusion-based policies.
DIA treats the denoising chain as the sequence of decisions being optimized, and learns a value function along it.
It fits a critic to partially denoised actions, forms a per-decision advantage that bootstraps from the value of the completed action, and mixes that estimator with the standard environment-return estimator under a single coefficient.
The overall mechanism is shown in Figure~\ref{fig:dppo_dia}.

\subsection{The denoising process is the primary MDP}
\label{sec:mdp}

At environment time \(t\), a diffusion policy constructs one environment action through \(K\) stochastic denoising steps.
We index these steps in the order in which they occur:
\begin{align}
    x_{t,0} &\sim \mathcal{N}(0, I), \label{eq:noise}\\
    x_{t,k+1} &\sim \pi_{\theta,k}
        (\,\cdot\mid s_t,x_{t,k})
\end{align}
for \(k=0,\ldots,K-1\), where \(x_{t,0}\) is the initial random noise and \(a_t = x_{t,K}\) the fully denoised action that gets executed in the environment.

The state of the flattened process at denoising decision \(k\) of environment
time \(t\) is the pair \((s_t,x_{t,k})\) together with the denoising index \(k\), and the
decision taken there is the next latent \(x_{t,k+1}\).
We keep the denoising index explicit throughout; equivalently, one may
absorb \(k\) into an augmented state
\((s_t,x_{t,k},k)\), which makes the flattened process time homogeneous: the
same pair \((s,x)\) generally has a different transition kernel at a
different noise level.

The environment reward enters at the denoising step that produces the final action $a_t = x_{t, K}$,
at every other transition there is 0 reward:
\begin{equation}
    r_{t,k} =
    \begin{cases}
        R(s_t,a_t), & k = K-1,\\
        0,          & k < K-1.
    \end{cases}
    \label{eq:flat-reward}
\end{equation}
Additionally, at denoising step $K - 1$, we transition to the next denoising chain and outer environment state:
\begin{equation}
    (s_t,x_{t,k}) \;\xrightarrow{\;x_{t,k+1}\;}\;
    \begin{cases}
        (s_t,x_{t,k+1}),     & k < K-1,\\
        (s_{t+1},x_{t+1,0}), & k = K-1,
    \end{cases}
    \label{eq:flat-transition}
\end{equation}
where \(s_{t+1}\sim P(\,\cdot\mid s_t,a_t)\) comes from the environment transition
kernel and the next chain restarts from fresh noise \(x_{t+1,0}\sim\mathcal{N}(0,I)\).

\subsection{Exact policy gradient on the flattened MDP}
\label{sec:exact-pg}

We have the discounted return for a trajectory and the objective as:
\begin{equation}
G(\tau):=\sum_{t\geq0}\gamma^t r_t,
\qquad
J(\theta):=\E_{\tau\sim p_\theta}[G(\tau)].
\label{eq:trajectory-return}
\end{equation}

The policy gradient is:
\begin{align}
\nabla_\theta J(\theta)
=\E\!\left[
G(\tau)\nabla_\theta\log p_\theta(\tau)
\right]
\nonumber\\[-2pt]
\end{align}

We can express the PDF of a sampled trajectory, \(p_\theta(\tau)\), as:
\begin{equation}
\begin{alignedat}{2}
p_\theta(\tau)
&=P_0(s_0)\prod_{t\geq0}\Bigl[&&
\mathcal \rho(x_{t,0})
\prod_{k=0}^{K-1}
\pi_{\theta,k}(x_{t,k+1}\mid s_t,x_{t,k})
\\[-2pt]
&&&P(s_{t+1}\mid s_t,x_{t, K})\Bigr].
\end{alignedat}
\label{eq:trajectory-density}
\end{equation}

Differentiating with respect to \(\theta\) and removing terms that do not depend on it gives us:
\begin{equation}
\nabla_\theta\log p_\theta(\tau)
=
\sum_{t\geq0}\sum_{k=0}^{K-1}
\nabla_\theta\log\pi_{\theta,k}
(x_{t,k+1}\mid s_t,x_{t,k}).
\label{eq:trajectory-score}
\end{equation}

Substituting \(H_{t,k}
:=
\nabla_\theta\log\pi_{\theta,k}
(x_{t,k+1}\mid s_t,x_{t,k})
\label{eq:score}\), we can write it as:

\begin{equation}
\nabla_\theta\log p_\theta(\tau)
=
\sum_{t\geq0}\sum_{k=0}^{K-1}H_{t,k}.
\end{equation}

The policy gradient is then:
\begin{align}
\nabla_\theta J(\theta)
&=\E\!\left[
G(\tau)\nabla_\theta\log p_\theta(\tau)
\right]
\nonumber
=\sum_{t\geq0}\sum_{k=0}^{K-1}
\E[H_{t,k}G(\tau)]
\nonumber\\
&=\sum_{t\geq0}\gamma^t\sum_{k=0}^{K-1}
\E[H_{t,k}g_t]
=\E\!\left[
\sum_{t\geq0}\gamma^t
\sum_{k=0}^{K-1}H_{t,k}g_t
\right]
\label{eq:exact-pg}
\end{align}

Next, even though, unlike standard policy gradient score function, \(H_{t,k}\) depends on the internal denoising path, it still holds that (full proof in \appautoref{app:policy-grad-deriv}):
\begin{equation}
\nabla_\theta J(\theta)
=
\E\!\left[
\sum_{t\geq0}\gamma^t
\sum_{k=0}^{K-1}
H_{t,k}Q(s_t,a_t)
\right]
\label{eq:pg-Q}
\end{equation}

Subtracting the value function baseline we can further transform that to (full proof in \appautoref{app:policy-grad-deriv}):
\begin{align}
\nabla_\theta J(\theta)
&=
\E\!\left[
\sum_{t\geq0}\gamma^t
\sum_{k=0}^{K-1}
H_{t,k}\bigl(Q(s_t,a_t)-V(s_t)\bigr)
\right]
\nonumber\\
&=
\E\!\left[
\sum_{t\geq0}\gamma^t
\sum_{k=0}^{K-1}
H_{t,k}A(s_t,a_t)
\right]
\label{eq:pg-A}
\end{align}

\subsection{Value and advantage on partially denoised actions}

We can define the value along the internal denoising chain as:
\begin{align}
V_k(s_t,x_{t,k})
&:=\E[g_t\mid s_t,x_{t,k}]
\nonumber\\
&=\E[Q(s_t,a_t)\mid s_t,x_{t,k}].
\label{eq:Vk-def}
\end{align}

With terminal value \(V_K(s,a)\) being equal to the Q-function for the fully denoised action:
\begin{equation}
V_K(s_t,x_{t,K})
=
Q(s_t,a_t),
\qquad
x_{t,K}=a_t.
\label{eq:Vk-terminal}
\end{equation}

Note, that while all states along a single denoising chain have the same return, the value is not constant along it. Different denoising steps assign different probabilities to realized actions, therefore resulting in different expected returns and therefore different values.

We can represent the policy gradient in terms of inner, denoising chain value as:
\begin{align}
\nabla_\theta J(\theta)
&=\E\!\Biggl[
\sum_{t\geq0}\gamma^t
\sum_{k=0}^{K-1}H_{t,k}
\bigl(Q(s_t,a_t)-V_k(s_t,x_{t,k})\bigr)
\Biggr]
\label{eq:pg-inner-value}
\end{align}

Similarly to \(V_k\), we define advantage along the denoising chain as:
\begin{equation}
A_k(s_t,x_{t,k},x_{t,k+1}):=V_{k+1}(s_t,x_{t,k+1})-V_k(s_t,x_{t,k})
\label{eq:exact-inner-adv}
\end{equation}

We can show that the policy gradient formulation from \autoref{eq:pg-inner-value} can further be transformed to be in terms of this inner denoising chain advantage, finally resulting in the policy gradient formula we utilize in our approach (full proof in \appautoref{app:policy-grad-deriv}):
\begin{align}
\nabla_\theta J(\theta)
&=\E\!\Biggl[
\sum_{t\geq0}\gamma^t
\sum_{k=0}^{K-1}H_{t,k}
A_k(s_t,x_{t,k},x_{t,k+1})
\Biggr]
\label{eq:pg-inner-advantage}
\end{align}

\subsection{Estimating the inner advantage}
\label{sec:inner-estimation}

For a practical policy gradient algorithm we now need to estimate the inner advantage \(A_k\). We use GAE within the denoising chain to estimate it.

\begin{equation}
\widehat{\delta}_{t,k}
=
\widehat V_{k+1}(s_t,x_{t,k+1})
-
\widehat V_k(s_t,x_{t,k})
\label{eq:inner-delta}
\end{equation}

\begin{equation}
\widehat A_{t,k}^{\mathrm{in}}
=
\sum_{\ell=0}^{K-1-k}
\lambda_{\mathrm{in}}^\ell
\widehat{\delta}_{t,k+\ell}
\label{eq:inner-gae}
\end{equation}

\begin{figure*}[!t]
\centering
\includegraphics[width=\textwidth]{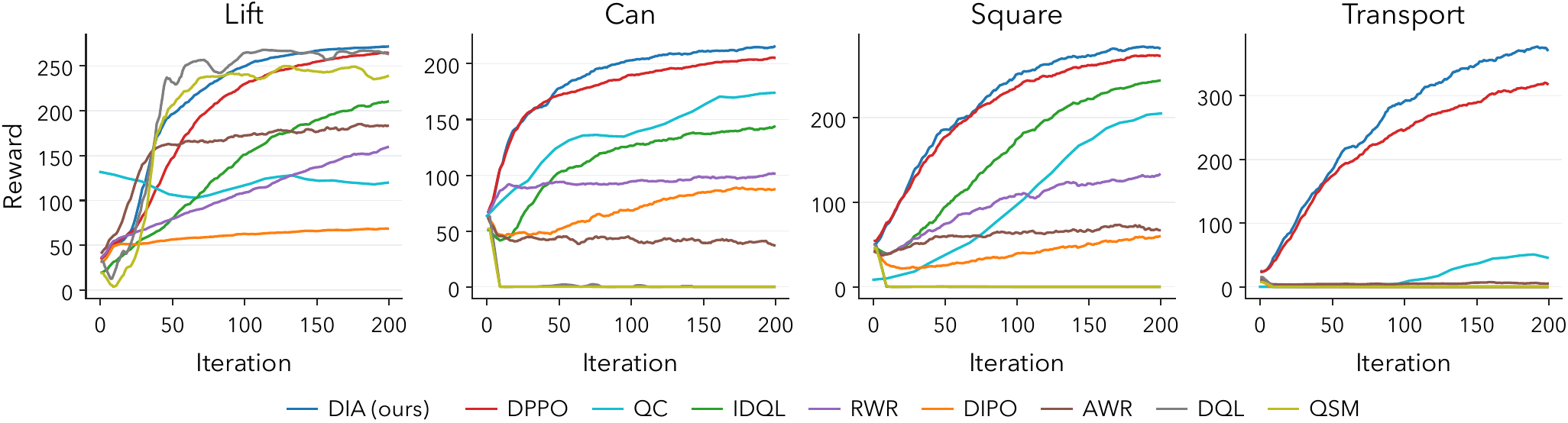}
\caption{Fine-tuning return on robomimic, mean over $5$ seeds.}
\label{fig:results_ci}
\end{figure*}
\begin{table*}[!t]
\centering
\caption{
Final performance on Robomimic. Each entry reports return, with success rate
in parentheses where applicable. Best return per row is shown in bold.
}
\label{tab:robomimic_results}
\footnotesize
\setlength{\tabcolsep}{1.5pt}
\begin{tabular}{lcccccccccc}
\toprule
Task & IDQL & DQL & QSM & RWR & DIPO & AWR & QC & ResiP & DPPO & DIA (ours) \\
\midrule
\textsc{Lift}
& $242.8$ ($1.00$)
& $268.8$ ($1.00$)
& $245.4$ ($0.99$)
& $163.2$ ($0.98$)
& $72.4$ ($0.99$)
& $197.6$ ($0.88$)
& $123.4$ ($0.98$)
& $133.5$ ($0.99$)
& $265.9$ ($1.00$)
& $\mathbf{273.6}$ ($1.00$) \\

\textsc{Can}
& $175.0$ ($1.00$)
& $0.0$ ($0.00$)
& $0.1$ ($0.00$)
& $108.4$ ($0.89$)
& $105.0$ ($0.87$)
& $44.3$ ($0.27$)
& $176.8$ ($0.86$)
& $160.5$ ($0.96$)
& $210.2$ ($1.00$)
& $\mathbf{221.0}$ ($0.98$) \\

\textsc{Square}
& $274.6$ ($0.99$)
& $0.0$ ($0.00$)
& $0.0$ ($0.00$)
& $150.7$ ($0.86$)
& $87.1$ ($0.51$)
& $85.5$ ($0.39$)
& $204.8$ ($0.70$)
& $159.3$ ($0.76$)
& $287.2$ ($0.98$)
& $\mathbf{302.3}$ ($0.98$) \\

\textsc{Transport}
& $0.0$ ($0.00$)
& $0.0$ ($0.00$)
& $0.0$ ($0.00$)
& $0.0$ ($0.00$)
& $0.0$ ($0.00$)
& $7.7$ ($0.07$)
& $46.0$ ($0.11$)
& $114.9$ ($0.50$)
& $356.1$ ($0.90$)
& $\mathbf{438.5}$ ($0.91$) \\

\textsc{Square-Pixel}
& -- & -- & -- & -- & -- & -- & -- & --
& $249.6$ ($0.95$)
& $\mathbf{269.7}$ ($0.95$) \\

\textsc{Can-Pixel}
& -- & -- & -- & -- & -- & -- & -- & --
& $195.2$ ($0.99$)
& $\mathbf{211.3}$ ($0.98$) \\
\bottomrule
\end{tabular}
\end{table*}
\begin{table}[!t]
\centering
\caption{
Final performance, reward (SR) on FurnitureBench, Franka Kitchen, and D3IL.
Best performance per row is shown in bold.
}
\label{tab:other_results}
\footnotesize
\setlength{\tabcolsep}{5pt}
\begin{tabular}{llcc}
\toprule
Benchmark & Task & DPPO & DIA (ours) \\
\midrule

\multirow{2}{*}{FurnitureBench}
& \textsc{One-Leg}
& $203.2$ ($0.79$)
& $\mathbf{227.1}$ ($\mathbf{0.82}$) \\

& \textsc{Lamp}
& $447.0$ ($0.35$)
& $\mathbf{538.4}$ ($\mathbf{0.47}$) \\

\midrule

\multirow{3}{*}{Franka Kitchen}
& \textsc{Complete}
& $\mathbf{3.99}$
& $3.97$ \\

& \textsc{Partial}
& $3.30$
& $\mathbf{3.79}$ \\

& \textsc{Mixed}
& $3.00$
& $\mathbf{3.71}$ \\

\midrule

\multirow{3}{*}{D3IL \textsc{Avoid}}
& \textsc{M1}
& $94.5$
& $\mathbf{99.4}$ \\

& \textsc{M2}
& $93.6$
& $\mathbf{98.9}$ \\

& \textsc{M3}
& $94.6$
& $\mathbf{96.9}$ \\

\bottomrule
\end{tabular}
\end{table}

From \autoref{eq:Vk-def} we have that \(V_k\) is equal to the expected Q-value of the environment MDP for the current state and fully denoised action. We train \(V_k\) using a mean-square error loss with respect to this environment Q-value:

\begin{equation}
\mathcal L_V
=
\E\!\left[
\sum_{k=0}^{K-1}
\left(
\widehat V_k(s_t,x_{t,k})
-
\widehat Q(s_t,a_t)
\right)^2
\right].
\label{eq:V-loss}
\end{equation}

What is left now is to estimate the environment Q-value itself. We learn the estimator for it using TD-learning. We use a mean of an ensemble of independently initialized Q functions for the estimate to reduce noise:

\begin{equation}
\widehat Q(s_t,a_t)
=
\frac{1}{N}
\sum_{n=1}^{N}
\widehat Q^{(n)}(s_t,a_t).
\label{eq:Q-ensemble}
\end{equation}

\subsection{Advantage mixing and scale matching}
\label{sec:fusion}

In addition to the inner denoising chain advantage \(A_k\) we can estimate the normal environment level advantage \(A\) using GAE at this outer level as well:
\begin{align}
\widehat\delta_t^{\mathrm{out}}
&=
r_t+\gamma\widehat V(s_{t+1})-\widehat V(s_t),
\label{eq:outer-delta}\\
\widehat A_t^{\mathrm{out}}
&=
\sum_{\ell\geq0}
(\gamma\lambda)^\ell
\widehat\delta_{t+\ell}^{\mathrm{out}}.
\label{eq:outer-gae}
\end{align}

These two advantage formulations provide complimentary signals. We utilize both by performing policy gradient updates on the combined advantage:
\begin{equation}
\widehat A_{t,k}^{\mathrm{comb}}
=
\widehat A_t^{\mathrm{out}}
+
\alpha\widehat A_{t,k}^{\mathrm{in}},
\qquad
\alpha\geq0.
\label{eq:clean-mixture}
\end{equation}

The two advantage estimators are produced by different critics, so their
relative scales can differ and change throughout training. We therefore
match their scales once per training batch \(\mathcal B\):
\begin{equation}
c_{\mathcal B}
=
\frac{
\operatorname{sd}_{\mathcal B}
\left(\widehat A_t^{\mathrm{out}}\right)
}{
\operatorname{sd}_{\mathcal B}
\left(\widehat A_{t,k}^{\mathrm{in}}\right)
}.
\label{eq:scale-match}
\end{equation}

The final advantage formulation we use in the policy gradient updates is therefore:
\begin{equation}
\widehat A_{t,k}^{\mathrm{comb}}
=
\widehat A_t^{\mathrm{out}}
+
\alpha c_{\mathcal B}
\widehat A_{t,k}^{\mathrm{in}},
\qquad
\alpha\geq0.
\label{eq:combined-advantage}
\end{equation}
After scale matching, \(\alpha\) controls the relative scale of the inner
contribution.

\section{Experiments}
\label{sec:experiments}
We evaluate DIA on four benchmarks: robomimic~\citep{mandlekar2021what}, FurnitureBench~\citep{10.1177/02783649241304789}, Franka Kitchen~\citep{fu2020d4rl,pmlr-v100-gupta20a}, and D3IL Avoid~\citep{jia2024towards}. These benchmarks cover a range of settings, from single-arm manipulation to long-horizon bimanual tasks, multi-stage kitchen manipulation, and multimodal navigation, with both dense and sparse rewards.

\textbf{Baselines.} For each task, we first pretrain BC policies and then fine-tune the same policy using different methods. We primarily compare DIA with DPPO~\citep{ren2025diffusion} across all benchmarks. On robomimic, we additionally compare against QSM~\citep{3692070.3693742}, IDQL~\citep{hansenestruch2023idql}, DQL~\citep{wang2023diffusion}, DIPO~\citep{yang2023policyrepresentationdiffusionprobability}, AWR~\citep{peng2019advantageweightedregressionsimplescalable}, and RWR~\citep{10.1145/1273496.1273590}, which are commonly used RL fine-tuning methods for single-step policies. We also compare against two RL fine-tuning methods designed for action-chunked policies: Q-chunking~\citep{li2025reinforcement} for flow-matching policy, and ResiP\citep{11127442}, which keeps the base policy fixed and predicts a residual action for each action in the chunk.

\subsection{Results}

\begin{figure}[t]
\centering
\includegraphics[width=\linewidth]{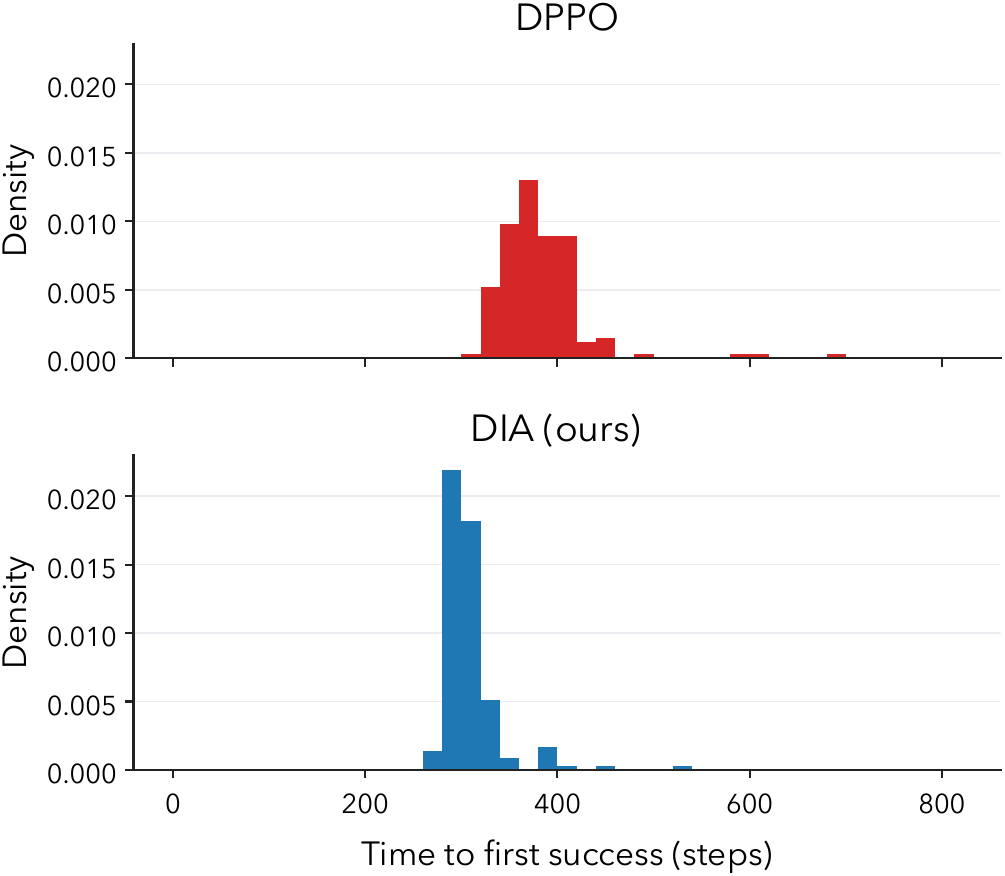}
\caption{Time to first success on \textsc{Transport}, in environment steps, over $200$ evaluation episodes per seed.}
\label{fig:ttc}
\end{figure}
\begin{figure}[t]
\centering
\includegraphics[width=\linewidth]{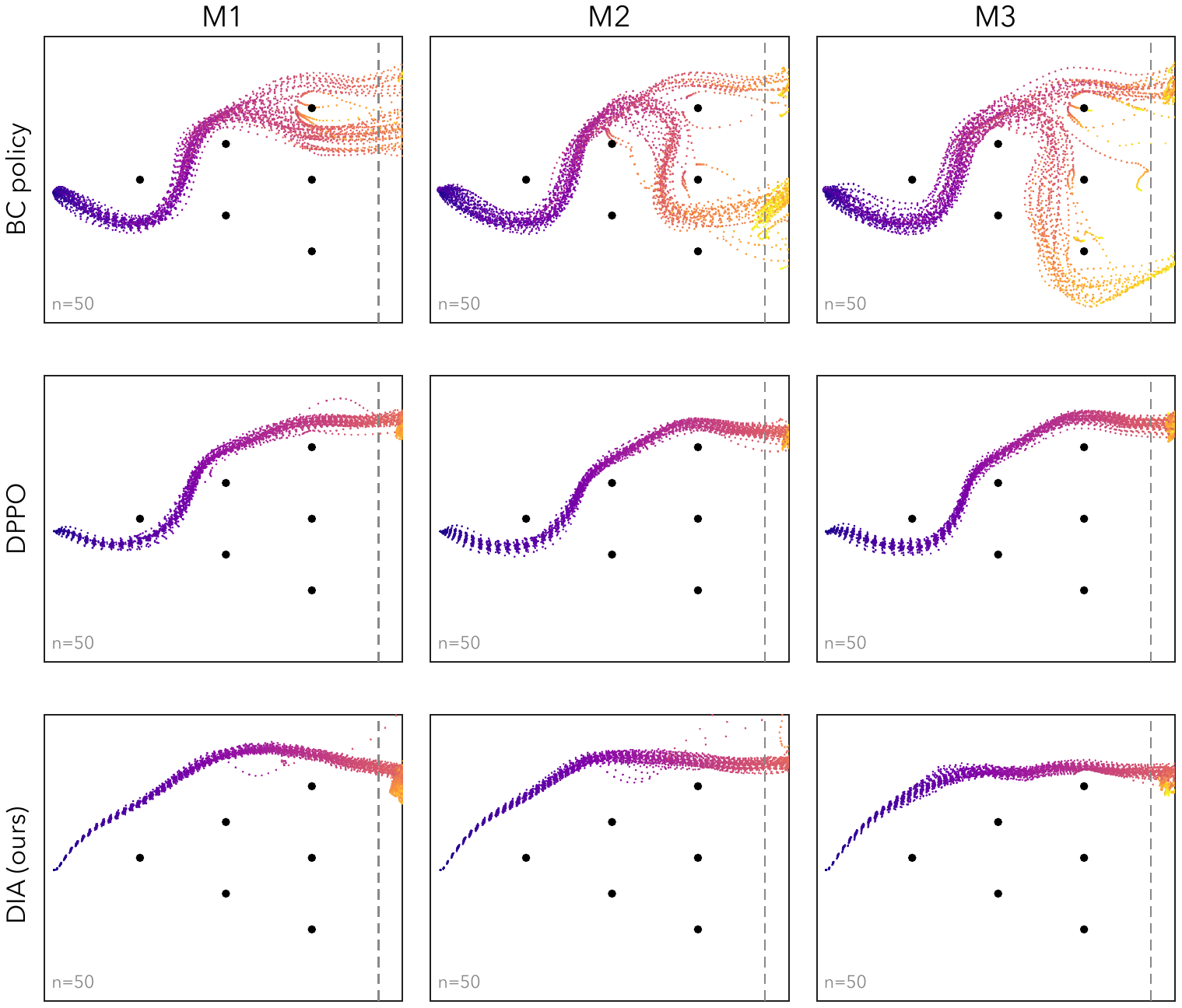}
\caption{D3IL \textsc{avoid}, one row per policy and one column per task variant. Each
panel draws $50$ trajectories as dots colored along the episode. All trajectories are the end
effector.}
\label{fig:avoid_density}
\end{figure}

\textbf{A.1: DIA consistently improves downstream performance across tasks.}
We first evaluate DIA on robomimic, where we run the full set of baseline methods across four manipulation tasks of increasing difficulty: \textsc{Lift}, \textsc{Can}, \textsc{Square}, and bimanual \textsc{Transport} task. DIA achieves the highest final reward among the compared methods, as shown in Table~\ref{tab:robomimic_results} and Figure~\ref{fig:results_ci}.
On robomimic, DIA improves return over DPPO on all four tasks while maintaining similar success rates. The difference is small on the shorter-horizon \textsc{Lift}, \textsc{Can}, and \textsc{Square} tasks. However, for the harder bimanual \textsc{Transport} task, DIA achieves 23\% higher return at nearly the same success rate compared with DPPO. This suggests that DIA improves the quality and efficiency of successful trajectories rather than simply increasing the number of successful episodes. Similar trend also holds for image-based policies, where DIA improves return over DPPO by approximately 8\% on average across \textsc{Square-Pixel} and \textsc{Can-Pixel}.

To further evaluate DIA on sparse longer-horizon tasks, we use FurnitureBench, which contains two long-horizon assembly tasks. Specifically, \textsc{One-Leg} requires attaching a leg to a tabletop, and the more challenging \textsc{Lamp-Med} requires assembling multiple lamp components.
On \textsc{One-Leg}, where performance is already high, DIA improves return by 12\% and success rate by 3\% over DPPO. The gains are larger on the more challenging \textsc{Lamp-Med} task, where DIA improves return by 20\% and success rate by 12\%. This further suggests that DIA provides greater benefit as the task becomes longer-horizon and more difficult.

\textbf{A.2: DIA reaches successful states more efficiently.}

Since all tasks use binary success rewards, higher return at a similar success rate means that DIA reaches successful states earlier. On \textsc{Transport}, where DIA and DPPO have similar success rates, DIA reduces the median time to task completion by $21\%$, as shown in Figure~\ref{fig:ttc}.

To visualize this behavior more clearly, we further examine D3IL \textsc{Avoid}, where the policy trajectory can be directly plotted in the 2D workspace. Across all three pretrained policies, DIA finds shorter and more direct routes to the goal, while DPPO remains closer to the curved trajectories of the pretrained BC policy shown in Figure~\ref{fig:avoid_density}. These results show that DIA improves return not simply by succeeding more often, but by finding more efficient ways to reach successful states.

\begin{figure*}[!t]
\centering
\includegraphics[width=\textwidth]{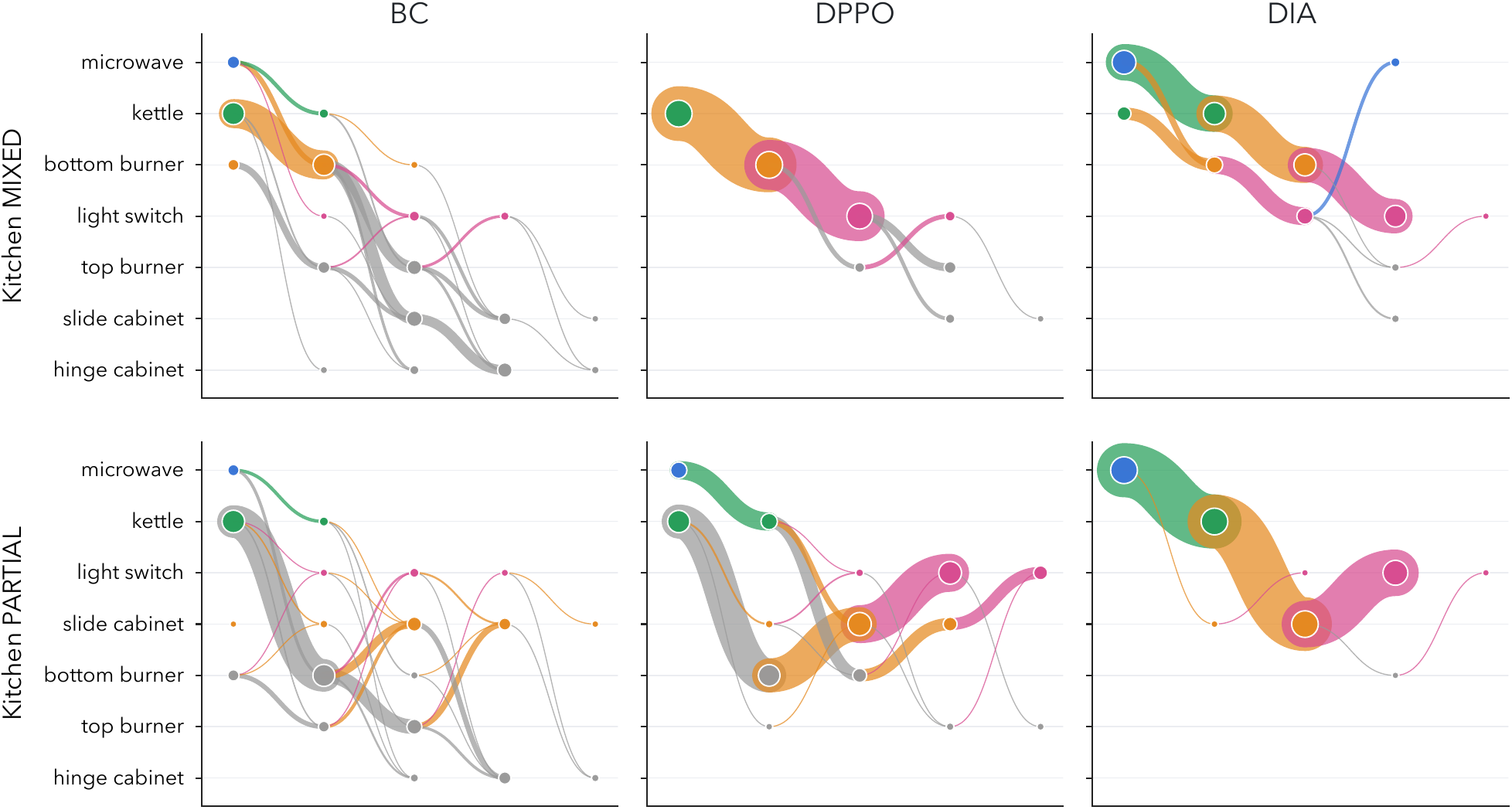}
\caption{Ordered subtask-completion flow on Kitchen \textsc{Mixed} (top) and \textsc{Partial} (bottom); width is the fraction of episodes taking that path.}
\label{fig:kitchen_flow}
\end{figure*}

\textbf{A.3: DIA learns successful task sequences that are weakly represented or absent in the demonstrations.}
We study this behavior on D4RL Kitchen~\citep{fu2020d4rl}, where the complete four-subtask solution becomes progressively less represented from \textsc{Complete} to \textsc{Partial} and \textsc{Mixed}. In \textsc{Complete}, where the full sequence is present in every demonstration, DIA and DPPO perform similarly. In \textsc{Partial}, where only $17$ of $551$ demonstrations contain the full sequence, DIA improves the number of completed subtasks by approximately $15\%$ over DPPO and increases full-task success by 54\%. Figure~\ref{fig:kitchen_flow} shows that DPPO still frequently follows similar sequences to the BC policy which branch into irrelevant subtasks, while DIA shifts toward successful task orderings.

The difference is larger in \textsc{Mixed}, where no demonstration contains the complete target sequence. DIA completes approximately $24\%$ more subtasks than DPPO and achieves $69\%$ full-task success, compared with $0\%$ for both BC and DPPO. This suggests that DIA can move further from the dominant pretrained behavior and recombine separately demonstrated subtasks into successful task sequences.

\section{Conclusion}
\label{sec:conclusion}

We introduce DIA, a
policy-gradient method for fine-tuning diffusion-based robot policies that enables credit assignment within the denoising process. Building on the two-level MDP formulation of diffusion-based policies, DIA defines a conditional value
over partially denoised actions and uses it to construct an inner advantage for
individual denoising decisions. Across Robomimic, FurnitureBench, Franka Kitchen, and D3IL, DIA improves final
performance over baseline methods, with particularly
large gains on challenging long-horizon problems. The experiments also show that
the effect of denoising-level credit extends beyond task return. DIA reaches
successful states through shorter and more direct executions, shift more from dominant behaviors of BC policies. We also demonstrate that when complete task solutions are weakly represented or absent from BC demonstrations, DIA discovers successful task-level sequences. Thus, these results demonstrate the effectiveness of DIA and show the benefit of incorporating intermediate denoising-state values into policy-gradient fine-tuning.

For future work, we aim to extend DIA beyond simulation and evaluate it on physical robots. Since DIA relies on on-policy interaction during fine-tuning, collecting sufficient rollouts can be costly in the real world. Improving sample efficiency and studying sim-to-real transfer are therefore important next steps.

\section*{Acknowledgments}
A.A.-G. thanks Anders~G.~Fr{\o}seth for his generous support. A.A.-G. also acknowledges the generous support of Natural Resources Canada and the Canada 150 Research Chairs program. 
 This research is part of the
University of Toronto’s Acceleration Consortium, which receives funding from the CFREF-2022-00042 Canada
First Research Excellence Fund. This research was enabled in part by compute resources provided by the Vector Institute and the Digital Research Alliance of Canada.


\bibliographystyle{plainnat}
\bibliography{references}

\appendices
\def\thesubsection{\thesection.\arabic{subsection}}
\def\thesubsectiondis{\thesection.\arabic{subsection}}

\section{Policy gradient derivation}
\label{app:policy-grad-deriv}

At environment time $t$ and denoising step $k$, define the score
\begin{equation}
H_{t,k}
:=
\nabla_\theta
\log\pi_{\theta,k}
(x_{t,k+1}\mid s_t,x_{t,k}).
\end{equation}

The environment receives the completed action $a_t$, not the intermediate
denoising states. Once $s_t$ and $a_t$ are fixed, the downstream environment
return is independent of the internal path that produced that action.
Therefore,
\begin{equation}
g_t
\mathrel{\perp\!\!\!\perp}
H_{t,k}
\mid (s_t,a_t).
\end{equation}

Define the completed-action value
\begin{equation}
Q(s_t,a_t)
:=
\Expect{P_\theta}[g_t\mid s_t,a_t],
\end{equation}
where $P_\theta$ denotes the joint on-policy distribution over the environment
trajectory and all denoising variables.

Conditioning on $s_t,a_t$ and using the conditional independence gives
\begin{gather}
\begin{aligned}
\Expect{P_\theta}[H_{t,k}g_t]
&=
\Expect{(s_t,a_t)\sim P_\theta}\!\left[
\Expect{P_\theta}[H_{t,k}g_t\mid s_t,a_t]
\right]\\
&=
\Expect{(s_t,a_t)\sim P_\theta}\!\left[
\Expect{P_\theta}[H_{t,k}\mid s_t,a_t]\,
\Expect{P_\theta}[g_t\mid s_t,a_t]
\right]\\
&=
\Expect{(s_t,a_t)\sim P_\theta}\!\left[
\Expect{P_\theta}[H_{t,k}\mid s_t,a_t]\,
Q(s_t,a_t)
\right]\\
&=
\Expect{P_\theta}[H_{t,k}Q(s_t,a_t)].
\end{aligned}
\\
\boxed{
\Expect{P_\theta}[H_{t,k}g_t]
=
\Expect{P_\theta}[H_{t,k}Q(s_t,a_t)].
}
\end{gather}

By Proposition~\ref{prop:zero-mean-score}, the score has zero conditional
mean under the distribution that sampled the current choice. Consequently,
for any function $b_k(s_t,x_{t,k})$ that does not depend on the sampled
choice $x_{t,k+1}$,
\begin{equation}
\label{eq:score-zero}
\Expect{P_\theta}\!\left[
H_{t,k}b_k(s_t,x_{t,k})
\right]
=0.
\end{equation}

Taking $b_k(s_t,x_{t,k})=V(s_t)$ gives
\begin{equation}
\Expect{P_\theta}\!\left[
H_{t,k}V(s_t)
\right]
=0.
\end{equation}

Therefore subtracting $V(s_t)$ from $Q(s_t,a_t)$ does not change the
expectation:
\begin{equation}
\begin{aligned}
\Expect{P_\theta}[H_{t,k}Q(s_t,a_t)]
&=
\Expect{P_\theta}\!\left[
H_{t,k}\bigl(Q(s_t,a_t)-V(s_t)\bigr)
\right]\\
&=
\Expect{P_\theta}[H_{t,k}A(s_t,a_t)],
\end{aligned}
\end{equation}
where
\begin{equation}
A(s_t,a_t):=Q(s_t,a_t)-V(s_t).
\end{equation}

The expression has now become the ordinary outer-advantage policy-gradient
term:
\begin{equation}
\boxed{
\Expect{P_\theta}[H_{t,k}g_t]
=
\Expect{P_\theta}[H_{t,k}A(s_t,a_t)].
}
\end{equation}

The outer advantage uses $V(s_t)$, which knows nothing about the partially
denoised action. At denoising step $k$, we can instead use the conditional value
\begin{equation}
V_k(s_t,x_{t,k})
:=
\Expect{P_\theta}[g_t\mid s_t,x_{t,k}].
\end{equation}

The difference $V_k(s_t,x_{t,k})-V(s_t)$ does not depend on $x_{t,k+1}$.
Applying equation \eqref{eq:score-zero} with
$b_k(s_t,x_{t,k})=V_k(s_t,x_{t,k})-V(s_t)$ gives
\begin{equation}
\label{eq:vk-minus-v-zero}
\Expect{P_\theta}\!\left[
H_{t,k}\bigl(V_k(s_t,x_{t,k})-V(s_t)\bigr)
\right]
=0.
\end{equation}

The outer advantage and the terminal-$Q$ weight differ by exactly this
zero-contribution term:
\begin{equation}
\label{eq:adv-difference}
A(s_t,a_t)
-
\bigl(Q(s_t,a_t)-V_k(s_t,x_{t,k})\bigr)
=
V_k(s_t,x_{t,k})-V(s_t).
\end{equation}

Equations \eqref{eq:vk-minus-v-zero} and \eqref{eq:adv-difference} therefore give
\begin{equation}
\label{eq:terminal-q}
\boxed{
\Expect{P_\theta}[H_{t,k}A(s_t,a_t)]
=
\Expect{P_\theta}\!\left[
H_{t,k}\bigl(Q(s_t,a_t)-V_k(s_t,x_{t,k})\bigr)
\right].
}
\end{equation}

For every denoising decision $j$, define its exact inner advantage as the
one-step change in chain value:
\begin{equation}
A_j(s_t,x_{t,j},x_{t,j+1})
:=
V_{j+1}(s_t,x_{t,j+1})
-V_j(s_t,x_{t,j}).
\end{equation}

Along one sampled chain, summing all the remaining inner advantages cancels
every intermediate value. Since the terminal chain value is
$V_K(s_t,a_t)=Q(s_t,a_t)$,
\begin{equation}
\label{eq:telescope}
\begin{aligned}
&\sum_{j=k}^{K-1}
A_j(s_t,x_{t,j},x_{t,j+1})\\
&\quad=
\sum_{j=k}^{K-1}
\bigl[
V_{j+1}(s_t,x_{t,j+1})
-V_j(s_t,x_{t,j})
\bigr]\\
&\quad=
V_K(s_t,a_t)-V_k(s_t,x_{t,k})\\
&\quad=
Q(s_t,a_t)-V_k(s_t,x_{t,k}).
\end{aligned}
\end{equation}

Substituting equation \eqref{eq:telescope} into the
right-hand side of equation \eqref{eq:terminal-q} gives
\begin{equation}
\label{eq:split}
\begin{aligned}
&\Expect{P_\theta}\!\left[
H_{t,k}\bigl(Q(s_t,a_t)-V_k(s_t,x_{t,k})\bigr)
\right]\\
&\quad=
\Expect{P_\theta}\!\left[
H_{t,k}
\sum_{j=k}^{K-1}A_j(s_t,x_{t,j},x_{t,j+1})
\right]\\
&\quad=
\Expect{P_\theta}\!\left[H_{t,k}A_k\right]
+
\sum_{j=k+1}^{K-1}
\Expect{P_\theta}\!\left[H_{t,k}A_j\right].
\end{aligned}
\end{equation}

Before decision $j$ is sampled,
its expected inner advantage is zero:
\begin{equation}
\label{eq:inner-bellman}
\begin{aligned}
&\Expect{
x_{t,j+1}\sim
\pi_{\theta,j}(\cdot\mid s_t,x_{t,j})
}\!\left[
A_j(s_t,x_{t,j},x_{t,j+1})
\right]\\
&\quad=
\Expect{
x_{t,j+1}\sim
\pi_{\theta,j}(\cdot\mid s_t,x_{t,j})
}\!\left[
V_{j+1}(s_t,x_{t,j+1})
\right]
-V_j(s_t,x_{t,j})\\
&\quad=
V_j(s_t,x_{t,j})-V_j(s_t,x_{t,j})\\
&\quad=0.
\end{aligned}
\end{equation}

For $j>k$, the earlier score $H_{t,k}$ is already known before decision $j$ is
sampled. Let $\mathcal H_{t,j}=(s_t,x_{t,0},\ldots,x_{t,j})$ denote the chain
history known at that point. Decomposing $P_\theta$ over that history and the
next decision, then applying equation \eqref{eq:inner-bellman}, gives
\begin{equation}
\begin{aligned}
\Expect{P_\theta}\!\left[H_{t,k}A_j\right]
&=
\Expect{\mathcal H_{t,j}\sim P_\theta}\!\left[
H_{t,k}
\Expect{
x_{t,j+1}\sim
\pi_{\theta,j}(\cdot\mid s_t,x_{t,j})
}[A_j]
\right]\\
&=0,
\qquad j>k.
\end{aligned}
\end{equation}

Therefore all the later terms in
equation \eqref{eq:split} vanish, while the current term remains:
\begin{equation}
\boxed{
\begin{aligned}
&
\Expect{P_\theta}\!\left[
H_{t,k}\bigl(Q(s_t,a_t)-V_k(s_t,x_{t,k})\bigr)
\right]
\\
&\quad=
\Expect{P_\theta}\!\left[
H_{t,k}A_k(s_t,x_{t,k},x_{t,k+1})
\right].
\end{aligned}
}
\end{equation}

Combining everything, we get:
\begin{equation}
\boxed{
\begin{aligned}
\Expect{P_\theta}[H_{t,k}g_t]
&=
\Expect{P_\theta}[H_{t,k}Q(s_t,a_t)]\\
&=
\Expect{P_\theta}[H_{t,k}A(s_t,a_t)]\\
&=
\Expect{P_\theta}\!\left[
H_{t,k}\bigl(Q(s_t,a_t)-V_k(s_t,x_{t,k})\bigr)
\right]\\
&=
\Expect{P_\theta}\!\left[
H_{t,k}A_k(s_t,x_{t,k},x_{t,k+1})
\right].
\end{aligned}
}
\end{equation}

\begin{proposition}
\label{prop:zero-mean-score}
Let $p_\theta(u\mid z)$ be a normalized probability density on a
parameter-independent support $\mathcal U$, differentiable in $\theta$ and
such that differentiation may be moved through the integral. Then for every
$z$,
\begin{equation}
\label{eq:prop-score-zero}
\Expect{U\sim p_\theta(\cdot\mid z)}\!\left[
\nabla_\theta\log p_\theta(U\mid z)
\right]=0.
\end{equation}
Consequently, for $Z\sim\mu$ with $U\mid Z\sim p_\theta(\cdot\mid Z)$ and any
function $b(Z)$ that does not depend on the sampled choice $U$,
\begin{equation}
\label{eq:prop-score-baseline}
\Expect{Z\sim\mu,\;U\sim p_\theta(\cdot\mid Z)}\!\left[
\nabla_\theta\log p_\theta(U\mid Z)\,b(Z)
\right]=0.
\end{equation}
\end{proposition}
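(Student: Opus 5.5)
The plan is the standard score-function identity, followed by the tower property for the baseline statement.

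\emph{First claim, for fixed $z$.} Wherever $p_\theta(u\mid z)>0$ we have $\nabla_\theta\log p_\theta(u\mid z)=\nabla_\theta p_\theta(u\mid z)/p_\theta(u\mid z)$. Hence
\begin{equation*}
\Expect{U\sim p_\theta(\cdot\mid z)}\!\left[\nabla_\theta\log p_\theta(U\mid z)\right]
=\int_{\mathcal U}\nabla_\theta p_\theta(u\mid z)\,du .
\end{equation*}
Because $\mathcal U$ does not depend on $\theta$, no boundary terms appear. The hypothesis that differentiation may be moved through the integral then gives
\begin{equation*}
\int_{\mathcal U}\nabla_\theta p_\theta(u\mid z)\,du=\nabla_\theta\int_{\mathcal U}p_\theta(u\mid z)\,du=\nabla_\theta 1=0,
\end{equation*}
using normalization. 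This establishes \eqref{eq:prop-score-zero} for every $z$.

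\emph{Second claim.} I would condition on $Z$ and use the tower property:
\begin{equation*}
\Expect{Z,U}\!\left[\nabla_\theta\log p_\theta(U\mid Z)\,b(Z)\right]
=\Expect{Z\sim\mu}\!\left[b(Z)\,\Expect{U\sim p_\theta(\cdot\mid Z)}\!\left[\nabla_\theta\log p_\theta(U\mid Z)\right]\right].
\end{equation*}
The factor $b(Z)$ comes out of the inner expectation because it does not depend on $U$. By the first part, the inner expectation is $0$ for every value of $Z$, so the whole expression vanishes.

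\emph{Main obstacle.} The delicate point is justifying this conditional factorization. It requires $\nabla_\theta\log p_\theta(U\mid Z)\,b(Z)$ to be integrable, which I would take as implicit in the statement that the expectation exists. It also requires the points where $p_\theta=0$ to contribute nothing. Such points are harmless, since they form a null set under the sampling distribution and the support is fixed. I would remark on these integrability conditions rather than belabor them.

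In the application to the denoising chain, $Z$ is the history $\mathcal H_{t,k}$ (or just $(s_t,x_{t,k})$ by the Markov property) and $U=x_{t,k+1}$. The Gaussian denoising kernels have the fixed support $\mathbb R^d$ and satisfy the regularity assumptions, so the proposition yields \eqref{eq:score-zero}.
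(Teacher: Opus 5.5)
Your proposal is correct and matches the paper's proof essentially step for step: rewrite the expected score as $\int_{\mathcal U}\nabla_\theta p_\theta(u\mid z)\,du$, move the gradient outside the integral and use normalization, then condition on $Z$ and pull $b(Z)$ out of the inner expectation. Your remarks on integrability and on points where $p_\theta=0$ are valid refinements that the paper leaves implicit.
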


\begin{proof}
Hold $z$ fixed and integrate the score against the density:
\begin{equation}
\begin{aligned}
\Expect{U\sim p_\theta(\cdot\mid z)}\!\left[
\nabla_\theta\log p_\theta(U\mid z)
\right]
&=\int_{\mathcal U}
p_\theta(u\mid z)\nabla_\theta\log p_\theta(u\mid z)\,du\\
&=\int_{\mathcal U}\nabla_\theta p_\theta(u\mid z)\,du\\
&=\nabla_\theta\int_{\mathcal U}p_\theta(u\mid z)\,du\\
&=\nabla_\theta 1\\
&=0,
\end{aligned}
\end{equation}
which proves \eqref{eq:prop-score-zero}. For
\eqref{eq:prop-score-baseline}, condition on $Z$ and apply
\eqref{eq:prop-score-zero}:
\begin{equation}
\begin{aligned}
&\Expect{Z\sim\mu,\;U\sim p_\theta(\cdot\mid Z)}\!\left[
\nabla_\theta\log p_\theta(U\mid Z)\,b(Z)
\right]\\
&\quad=
\Expect{Z\sim\mu}\!\left[
b(Z)
\Expect{U\sim p_\theta(\cdot\mid Z)}\!\left[
\nabla_\theta\log p_\theta(U\mid Z)
\right]
\right]\\
&\quad=0.
\end{aligned}
\end{equation}
\end{proof}

\section{Design decisions}
\label{app:design}

\begin{figure*}[t]
\centering
\includegraphics[width=\textwidth]{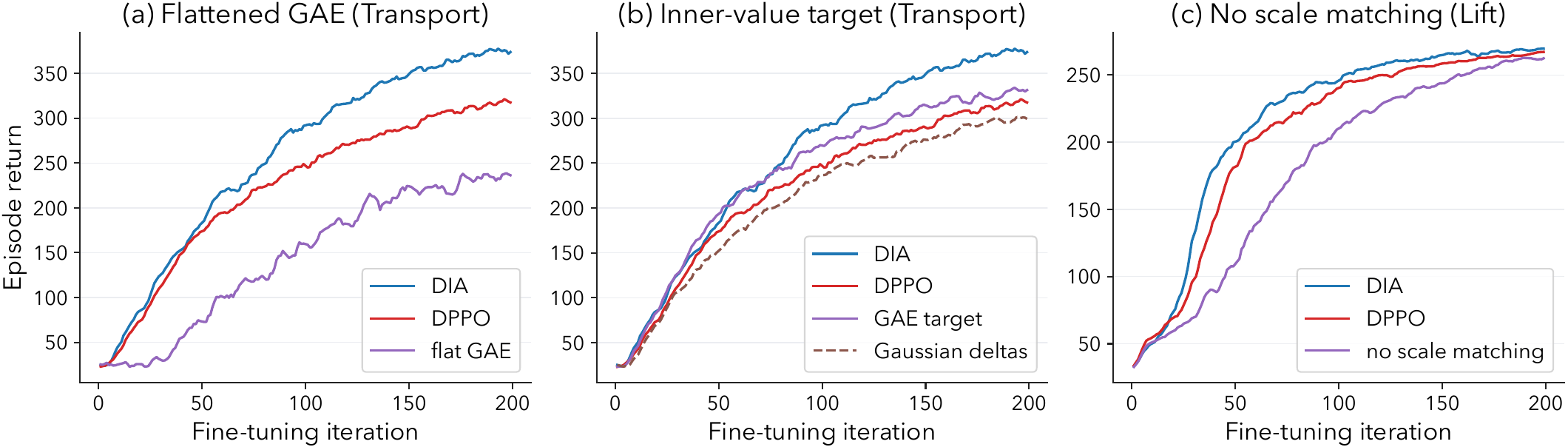}
\caption{Design decision ablations.
(a) One flattened GAE over the combined denoising-and-environment chain, against
DIA's two-level design, on \textsc{Transport}.
(b) Regressing the inner value on the environment GAE return instead of the
action's $Q$, on \textsc{Transport}, together with the Gaussian-delta control.
(c) Combining the two advantages at their raw scale, $\alpha{=}0.5$, on
\textsc{Lift}, single seed. Curves are smoothed over five iterations; (a) and (b)
average five seeds, and each panel legends its own arms.}
\label{fig:ablations}
\end{figure*}

\subsection{Scope of the inner MDP: within-trajectory advantages}
\label{app:innermdp-scope}

Consider instead treating the denoising latents as ordinary states of a single MDP, with
one GAE run across the combined denoising-and-environment chain. Such a value must
forecast, from a noisy intermediate latent $x_{t,k}$, a target that folds in three sources of
stochasticity: the remaining denoising of the current chain, the environment transition,
and the denoising noise of every subsequent action. The first two are well determined by
the action $x_{t,k}$ is forming: the current chain resolves to that action, and the
environment responds accordingly. The third is not. Predicting how the latent $x_{t,k}$
influences the future latents of subsequent denoising chains difficult, since those
states are high-dimensional, highly stochastic, and shaped by many independent factors;
making $x_{t,k}$ attend to this signal yields a high-variance, poorly identified target.
Because the inner GAE takes differences of the value across adjacent denoising
indices, those differences are governed by this cross-step noise rather than by the
effect of the denoising step itself, so the resulting per-step advantages are both noisy
and not causally meaningful. Collapsing the two levels in this way destabilizes training
in practice, even on the simplest robomimic tasks, as we confirm in our ablations.

We therefore adopt a two-level MDP, forming the inner-value advantage
$\widehat A^{\mathrm{in}}_{t,k}$ \emph{within a single denoising trajectory}: for a fixed
environment step $t$, the inner GAE runs over the $K$ denoising indices of the one action
$a_t$, initialized with $\widehat A^{\mathrm{in}}_{t,K}=0$ and with no bootstrapping into the denoising
chain of the next action $a_{t+1}$. Adjacent latent states influence one another only if
they lie on the same denoising chain, so the inner value $\widehat V_k(s_t,x_{t,k})$ is
asked to predict a single well-scoped quantity: the value of the action that \emph{this}
denoising chain will produce.

Collapsing the levels also inflates the effective horizon. A single flat MDP expands
every environment step into its $K$ denoising steps, lengthening a $T$-step trajectory
to one of $TK$ steps. Environment rewards, now separated by a full denoising chain at
every transition, must propagate back across this inflated horizon, where discounting
and the bootstrapping noise of the intervening denoising steps progressively attenuate
them, so the outer reward signal is marginalized. DIA keeps the environment reward MDP
at its true length $T$: the outer advantage $\widehat A^{\mathrm{out}}_t$ is a standard
environment-level GAE over environment steps, so rewards flow back in time undiluted,
while the inner advantage carries the separate within-chain credit at the denoising
level.

\smallskip\noindent\textbf{Ablation:} we run the flat alternative directly, one GAE over the combined
denoising-and-environment chain, with the environment reward on each chain's
boundary step. On \textsc{Transport},
Figure~\ref{fig:ablations}(a), its best setting reaches a train reward $235$ against $375$ for
DIA and $318$ for DPPO. 

\Needspace*{6\baselineskip}
\subsection{Inner-value target: regressing to the action's $Q$}
\label{app:qtarget}

DIA trains the inner value $\widehat V_k(s_t,x_{t,k})$ by regressing it directly to the
$Q$-value of the action the denoising chain produces, $\widehat Q(s_t,a_t)$, from every state
along the chain ($\widehat V_k$ is fit to $\widehat Q$ at every $k$). This
is the natural target following \eqref{eq:Vk-def}, we define the value of an
intermediate denoising state under zero inner reward as the expected value of the action that
continuing the chain will produce, which is the action's $Q$.

A natural alternative retains the generalized-advantage formulation used for the outer
signal and conditions its critic on the intermediate state. This trains an inner value
$\widehat V_k(s_t,x_{t,k})$ on the environment GAE return $R_t = A_t^{\mathrm{out}} + \widehat V(s_t)$,
the same target used for the outer critic $\widehat V$, but conditioned on the chain
state $x_{t,k}$ and the denoising index $k$ as well,
\begin{equation}
\mathcal{L}_{\mathrm{alt}}(\omega) = \mathbb{E}_{t,k}\big[\big(\widehat V_k(s_t,x_{t,k}) - R_t\big)^2\big].
\label{eq:altloss}
\end{equation}
It forms the inner advantage from $\widehat V_k$ through the inner GAE recursion,
\begin{equation}
\begin{aligned}
\widehat\delta_{t,k}
&= \widehat V_{k+1}(s_t,x_{t,k+1}) - \widehat V_k(s_t,x_{t,k}), \\
\widehat A^{\mathrm{in}}_{t,k}
&= \widehat\delta_{t,k}
 + \lambda_{\mathrm{in}}\,\widehat A^{\mathrm{in}}_{t,k+1},
\quad \widehat A^{\mathrm{in}}_{t,K}=0.
\end{aligned}
\label{eq:altgae}
\end{equation}

The two targets have the same population regression function. With an exact outer critic and on-policy data the outer
TD residual is $\delta_t=r_t+\gamma V(s_{t+1})-V(s_t)$ and the GAE return of
\eqref{eq:altloss} is $R_t=\sum_{l\ge0}(\gamma\lambda)^l\delta_{t+l}+V(s_t)$. Its terms
then satisfy
\begin{equation}
\mathbb E[\delta_t\mid s_t,a_t]
= Q(s_t,a_t)-V(s_t)=A(s_t,a_t),
\label{eq:delta-now}
\end{equation}
by the Bellman equation for $Q$. For $l\ge1$ we condition on $s_{t+l}$ alone, which
averages over the action as well as the transition:
\begin{equation}
\begin{aligned}
\mathbb E&[\delta_{t+l}\mid s_{t+l}]\\
&=\mathbb E_{a\sim\pi}\big[\mathbb E[r_{t+l}+\gamma V(s_{t+l+1})\mid s_{t+l},a]\big]-V(s_{t+l})\\
&=\mathbb E_{a\sim\pi}\big[Q(s_{t+l},a)\big]-V(s_{t+l})\\
&=V(s_{t+l})-V(s_{t+l})=0,
\end{aligned}
\label{eq:delta-later}
\end{equation}
the second line by the same Bellman equation and the third because $V(s)$ is by
definition $\mathbb E_{a\sim\pi}[Q(s,a)]$, which is where on-policy data is needed. By the Markov property and the tower rule
\eqref{eq:delta-later} therefore makes every $l\ge1$ term vanish under
$\mathbb E[\,\cdot\mid s_t,a_t]$, so only $l=0$ survives and $\lambda$ drops out:
\begin{equation}
\mathbb E[R_t\mid s_t,a_t]=A(s_t,a_t)+V(s_t)=Q(s_t,a_t).
\label{eq:R-is-Q}
\end{equation}
Averaging \eqref{eq:R-is-Q} over the actions the remainder of the chain can still produce,
\begin{equation}
\begin{aligned}
\mathbb E[R_t\mid s_t,x_{t,k},k]
&=\mathbb E\big[\,\mathbb E[R_t\mid s_t,a_t]\,\big|\,s_t,x_{t,k},k\big]\\
&=\mathbb E\big[Q(s_t,a_t)\mid s_t,x_{t,k},k\big],
\end{aligned}
\label{eq:same-target}
\end{equation}
which is exactly the target the terminal-$Q$ regression uses. Under squared error both
losses therefore identify the same inner value: neither target is intrinsically more
informative about $x_{t,k}$.

What separates them is the noise a finite sample must average over. Conditioning on the
emitted action and applying the law of total variance,
\begin{equation}
\begin{aligned}
\operatorname{Var}(R_t\mid s_t,x_{t,k})
&=\operatorname{Var}\big(\mathbb E[R_t\mid s_t,a_t]\mid s_t,x_{t,k}\big)\\
&\quad+\mathbb E\big[\operatorname{Var}(R_t\mid s_t,a_t)\mid s_t,x_{t,k}\big]\\
&=\operatorname{Var}\big(Q(s_t,a_t)\mid s_t,x_{t,k}\big)\\
&\quad+\mathbb E\big[\operatorname{Var}(R_t\mid s_t,a_t)\mid s_t,x_{t,k}\big],
\end{aligned}
\label{eq:target-var}
\end{equation}
using \eqref{eq:R-is-Q} in the last step. The first term is the variance in which action
the remainder of the chain produces; the second is the variance of what the environment
does once that action is executed. The terminal-$Q$ target carries only the first, its conditional variance
being $\operatorname{Var}(Q(s_t,a_t)\mid s_t,x_{t,k})$ exactly. Replacing $R_t$ by its
conditional expectation given $(s_t,a_t)$ thus preserves the mean, by \eqref{eq:R-is-Q},
while removing $\mathbb E[\operatorname{Var}(R_t\mid s_t,a_t)\mid s_t,x_{t,k}]$ from the
variance.

The derivation above is stated for the exact $V$ and $Q$, which is where the properties
it uses hold. What is implemented substitutes learned stand-ins for both: $R_t$ is formed
with the outer critic $\widehat V\neq V$, and the terminal-$Q$ target is the ensemble
estimate $\widehat Q\neq Q$. Neither idealisation is exact, so in practice this is a
bias--variance tradeoff rather than a strict improvement: the sampled return stays close
to the realized rewards at high variance, while the terminal-$Q$ target removes that
variance at the cost of whatever bias the $Q$-ensemble carries.

\smallskip\noindent\textbf{Ablation:} We run this alternative directly, training the inner
value on the environment GAE return of \eqref{eq:altloss} and forming the inner advantage
through \eqref{eq:altgae}, with the only change being the target for $\widehat V$. On \textsc{Transport},
Figure~\ref{fig:ablations}(b), it reaches a train reward $332$ against $375$ for DIA over five seeds.

\smallskip\noindent\textbf{Gaussian deltas.} As a control that the gain is not
intrinsic to perturbing the update with an extra scale-matched advantage, we replace
the inner TD residuals $\widehat\delta_{t,k}$ of \eqref{eq:inner-delta} with mean-zero
Gaussian draws $\tilde\delta_{t,k}$, then re-run the
inner GAE and the scale matching over them to obtain $\tilde A^{\mathrm{in}}_{t,k}$.
The substituted term enters the update at DIA's magnitude and with the same profile
across $k$, carrying no information about the chain.

By \eqref{eq:inner-bellman} the real
inner advantage is already conditionally mean-zero given the chain history, so at every
$k$ both terms have the same expectation. Because $\tilde\delta_{t,k}$ is drawn
independently of the sampled transition $x_{t,k+1}$, \eqref{eq:prop-score-baseline}
applies and, holding the batch-level scale factor fixed,
\begin{equation}
\Expect{P_\theta}\!\left[H_{t,k}\,\tilde A^{\mathrm{in}}_{t,k}\right]=0 .
\end{equation}
Empirically it reaches a train reward
of $299$ on \textsc{Transport}
against $318$ for DPPO and $375$ for DIA, recovering none of the gain.

\subsection{Scale matching the inner advantage}
\label{app:scalematch}
The inner advantage $\widehat A^{\mathrm{in}}_{t,k}$ and the outer advantage
$\widehat A^{\mathrm{out}}_t$ are produced by different value functions and need not
share a scale; moreover,
the inner-value scale drifts over training as the $Q$-ensemble grows. To keep the
mixing hyperparameter $\alpha$ in \eqref{eq:clean-mixture} interpretable, we rescale the inner
advantage to match the per-iteration standard deviation of the outer advantage
before combining:
\begin{equation}
c_{\mathcal B}\,\widehat A_{t,k}^{\mathrm{in}},
\qquad
c_{\mathcal B}
=
\frac{\operatorname{sd}_{\mathcal B}\big(\widehat A_t^{\mathrm{out}}\big)}
     {\operatorname{sd}_{\mathcal B}\big(\widehat A_{t,k}^{\mathrm{in}}\big)}.
\end{equation}
With this rescaling the standard deviation of the inner contribution
$\alpha c_{\mathcal B}\widehat A^{\mathrm{in}}_{t,k}$ is exactly $\alpha$ times that of the outer
advantage, so $\alpha$ has a fixed meaning (the fraction of inner contribution
relative to the outer signal), independent of the $Q$ scale at any given
iteration. Both standard deviations are computed globally over the batch.

\smallskip\noindent\textbf{Ablation:}\label{app:scalematch-ablation} We ablate the rescaling by adding the two advantages at their raw scale, at
$\alpha=0.5$. The mixture is then set by whichever term is larger in raw units, and
since the inner scale follows the $Q$-ensemble it moves over training, so $\alpha$ no
longer fixes the inner-to-outer ratio at any point in the run. On \textsc{Lift} the two
are more than an order of magnitude apart, the inner advantage averaging $3.6\%$ of the
outer, so at $\alpha{=}0.5$ it supplies between $0.7\%$ and $3.5\%$ of the mixed
advantage over the whole run. The ablation ends at $262.1$, next to $266.4$ for DPPO and $271.0$
for DIA (Figure~\ref{fig:ablations}(c), single seed).

\begin{figure*}[t]
\centering
\includegraphics[width=\textwidth]{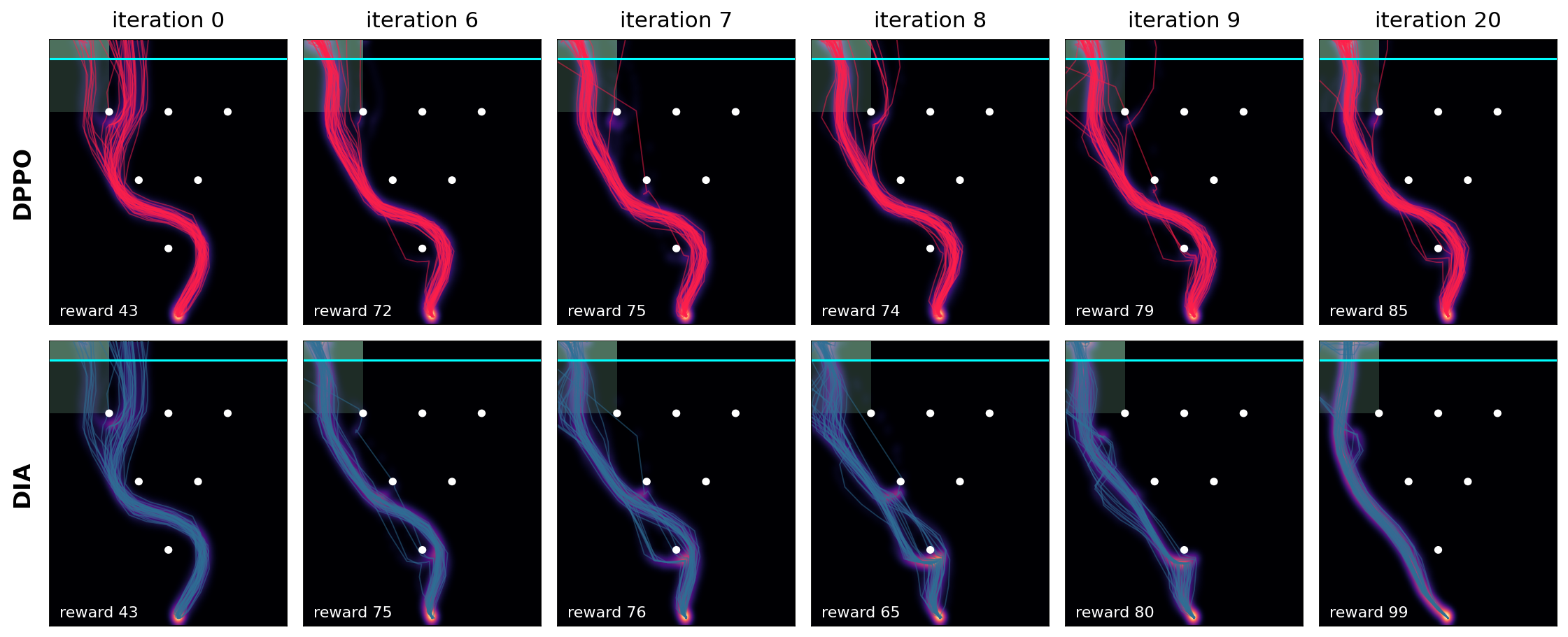}
\caption{D3IL \textsc{avoid} M1, seed $42$, at six fine-tuning iterations, with episode
reward per panel. Trajectory density with $35$ sampled paths drawn over it. DIA (bottom)
migrates to the far side of the first obstacle across iterations $7$ to $9$; DPPO (top),
on the same states, keeps the pretrained route throughout. Appendix~\ref{app:commitment}
probes the critics at iteration $7$, the crossover.}
\label{fig:avoid_migration}
\end{figure*}

\section{Probing the inner value over the denoising chain}
\label{app:perk-diagnostic}
A natural question is whether $\widehat V_k$ actually reads the chain state $x_{t,k}$ and
produces informative per-$k$ predictions, or merely outputs a state-conditioned
constant with noise. We probe this on a saved $\widehat V_k$ checkpoint
(\textsc{Square}, seed $43$, iteration $90$) by rolling out one successful episode
($T=100$, first reward at $t=26$) and evaluating $\widehat V_k(s_t,x_{t,k})$ for the
policy-sampled denoising chain at $12$ timesteps spanning pre-reward and post-reward
states.

\subsection{Within-trajectory $\widehat V_k$ predictions}
For each picked $t$, we sample one policy chain and evaluate $\widehat V_k$ at every
chain position.
\begin{center}
\small
\begin{tabular}{rrrr}
\toprule
$t$ & $\widehat V_0$ & $\widehat V_K$ & $\Delta_{k=0\to K}$ \\
\midrule
0   & $-23.32$  & $-16.26$  & $+7.06$  \\
3   & $+32.15$  & $+30.30$  & $-1.85$  \\
6   & $+79.56$  & $+93.99$  & $+14.42$ \\
9   & $+92.60$  & $+99.87$  & $+7.27$  \\
12  & $+77.19$  & $+97.19$  & $+20.00$ \\
\textbf{15}  & $\mathbf{+126.59}$ & $\mathbf{+157.25}$ & $\mathbf{+30.66}$ \\
18  & $+190.60$ & $+196.36$ & $+5.75$  \\
21  & $+213.75$ & $+222.11$ & $+8.36$  \\
24  & $+250.27$ & $+249.97$ & $-0.29$  \\
\midrule
\multicolumn{4}{l}{\emph{Post first reward ($t = 26$): action choice no longer matters}} \\
\midrule
27  & $+263.80$ & $+264.54$ & $+0.74$  \\
33  & $+266.84$ & $+268.02$ & $+1.18$  \\
\bottomrule
\end{tabular}
\end{center}

\noindent
$\widehat V_k$ moves along the chain at fixed $t$, with $\Delta_{k=0\to K}$ averaging
$11.9$ in magnitude over $t\le21$ and peaking at $+30.66$, so the critic reads $x_{t,k}$
rather than emitting a state-conditioned constant with noise. That movement also tracks
whether the action still matters: from $t=24$ on, with the first reward collected at
$t=26$, $\Delta_{k=0\to K}$ is at most $1.18$ in magnitude, an order of magnitude below
its pre-reward values, correctly indicating action choice no longer influences reward.

\subsection{$\widehat V_k$ reacts as the chain denoises}
\label{app:commitment}

On \textsc{avoid} the policy changes which side of the first obstacle it passes during
the first ten updates, which gives an explicit behavioural choice to attach the inner
value to. Rolling out $50$ episodes per checkpoint of a single run (M1, seed $42$) and
recording which side of obstacle~1 the end effector passes:

\begin{center}
\small
\begin{tabular}{l r r r r r r}
\toprule
iteration & $0$ & $6$ & $7$ & $8$ & $9$ & $20$ \\
\midrule
episodes passing left & $0$ & $6$ & $24$ & $31$ & $50$ & $50$ \\
\bottomrule
\end{tabular}
\end{center}

\noindent
The pretrained policy always goes right; by iteration $9$ the policy always goes left, and
iteration $7$ is the crossover. Figure~\ref{fig:avoid_migration} shows the paths
either side of it, with DPPO on the same states for contrast.

We take the iteration-$7$ checkpoint, fix one environment state $s$, and draw $N{=}400$
denoising chains from a \emph{single shared latent} at chain position $k{=}0$, so every
sample begins identically and all divergence is produced by the fine-tuned denoising
steps. Splitting the samples at the median lateral commitment of the emitted action
chunk, and reading the critics on each:

\begin{center}
\small\setlength{\tabcolsep}{4pt}
\begin{tabular}{l r r r}
\toprule
 & left group & right group & gap \\
\midrule
$\widehat Q(s,a_0)$ & $+0.198$ & $-0.166$ & $-0.364$ \\
\midrule
$\widehat V_{0}$  & $+0.690$ & $+0.690$ & $\phantom{-}0.000$ \\
$\widehat V_{2}$                     & $+1.048$ & $+0.727$ & $-0.320$ \\
$\widehat V_{4}$                     & $+0.861$ & $+0.299$ & $-0.563$ \\
$\widehat V_{6}$                     & $+0.713$ & $-0.070$ & $-0.782$ \\
$\widehat V_{8}$                     & $+0.455$ & $-0.335$ & $-0.789$ \\
$\widehat V_{10}$                    & $+0.716$ & $-0.278$ & $-0.994$ \\
\bottomrule
\end{tabular}
\end{center}

\noindent
$\widehat V(s)$ is $10.00$ for every sample by construction, and the value at $k{=}0$ is identical across samples because we fixed the initial latent. The separation then grows monotonically along the chain, which tracks the difference in route as the chain resolves toward it. 

We can see the effect as fine tuning proceeds, the policy steered by $\widehat V_k$ commits fully to the left side route in subsequent training iterations.

\section{Wall-clock overhead}
\label{app:wallclock}
Table~\ref{tab:wallclock} reports the per-iteration wall time of every method on the
robomimic tasks. DIA is the most expensive, $2$--$5\%$ above DPPO. Every entry is measured sequentially on the same RTX 3090 at $n_{\mathrm{env}}{=}50$.

\begin{table}[h]
\centering
\caption{Wall-clock time per iteration (seconds) of DIA versus the other methods.}
\label{tab:wallclock}
\small
\setlength{\tabcolsep}{5pt}
\begin{tabular}{lcccc}
\toprule
Method & \textsc{Lift} & \textsc{Can} & \textsc{Square} & \textsc{Transport} \\
\midrule
DIA (ours) & $90$ & $97$ & $141$ & $369$ \\
DPPO & $87$ & $93$ & $134$ & $362$ \\
IDQL & $47$ & $54$ & $80$ & $349$ \\
DQL & $47$ & $51$ & $80$ & $347$ \\
QSM & $40$ & $46$ & $69$ & $342$ \\
RWR & $37$ & $43$ & $66$ & $331$ \\
DIPO & $44$ & $51$ & $76$ & $341$ \\
AWR & $38$ & $45$ & $68$ & $333$ \\
\bottomrule
\end{tabular}
\end{table}

\section{Additional task and training details}
\label{app:tasks}

We evaluate on four benchmarks which span low and high-dimensional observations and horizons from $100$ to $1000$ environment steps. Table~\ref{tab:tasks} lists every task and
Table~\ref{tab:trainsettings} the optimization settings. Within a task, every
method we compare shares the actor architecture, the
pre-trained behavior-cloning checkpoint, the observation normalization and
every environment setting, so the comparison isolates the fine-tuning
algorithm. The one exception is QC~\citep{li2025reinforcement}, which we run in its own framework with its own flow policy and therefore does not share the pre-trained checkpoint of the other diffusion based methods.

\begin{table*}[t]
\centering
\caption{Tasks used in our experiments. $|\mathcal{O}|$ and $|\mathcal{A}|$ are the per-step observation and action dimensions, $H$ the environment-step horizon.}
\label{tab:tasks}
\small
\begin{tabular}{llccr}
\toprule
Benchmark & Task & $|\mathcal{O}|$ & $|\mathcal{A}|$ & $H$ \\
\midrule
\multirow{4}{*}{Robomimic}
  & \textsc{Lift}      & $19$ & $7$  & $300$  \\
  & \textsc{Can}       & $23$ & $7$  & $300$  \\
  & \textsc{Square}    & $23$ & $7$  & $400$  \\
  & \textsc{Transport} & $59$ & $14$ & $800$  \\
\midrule
\multirow{2}{*}{FurnitureBench}
  & \textsc{One-leg} (med) & $58$ & $10$ & $700$  \\
  & \textsc{Lamp} (med)    & $44$ & $10$ & $1000$ \\
\midrule
\multirow{3}{*}{D4RL Kitchen}
  & \textsc{Complete} & $60$ & $9$ & $280$ \\
  & \textsc{Mixed}    & $60$ & $9$ & $280$ \\
  & \textsc{Partial}  & $60$ & $9$ & $280$ \\
\midrule
\multirow{3}{*}{D3IL \textsc{Avoid}}
  & M1 & $4$ & $2$ & $100$ \\
  & M2 & $4$ & $2$ & $100$ \\
  & M3 & $4$ & $2$ & $100$ \\
\bottomrule
\end{tabular}
\end{table*}

\begin{table*}[t]
\centering
\caption{Training settings, shared across the tasks of a benchmark unless noted.
$T_a$ is the action chunk and $T_o$ the observation history, $K$ the number of
denoising steps of the pre-trained policy and $K_{\mathrm{ft}}$ the number that
are fine-tuned. $N_{\mathrm{env}}$ is the number of parallel environments,
$N_{\mathrm{itr}}$ the number of fine-tuning iterations and $N_{\mathrm{warm}}$
the number of critic-only warmup iterations before the actor is updated. The
learning rates and $\alpha$ are identical across all four benchmarks.}
\label{tab:trainsettings}
\small
\begin{tabular}{lcccccccccc}
\toprule
Benchmark & $T_a$ & $T_o$ & $K$ & $K_{\mathrm{ft}}$ & $\gamma$ & $N_{\mathrm{env}}$ & $N_{\mathrm{itr}}$ & $N_{\mathrm{warm}}$ & actor lr & critic lr \\
\midrule
Robomimic           & $4^{\dagger}$ & $1$ & $20$  & $10$ & $0.999$ & $50$   & $200^{\ddagger}$ & $2^{\ddagger}$ & $10^{-4}$ & $10^{-3}$ \\
FurnitureBench      & $8$           & $1$ & $100$ & $5$  & $0.999$ & $1000$ & $1000$           & $1$            & $10^{-5}$ & $10^{-3}$ \\
D4RL Kitchen        & $4$           & $1$ & $20$  & $10$ & $0.99$  & $40$   & $600$            & $0$            & $10^{-4}$ & $10^{-3}$ \\
D3IL \textsc{Avoid} & $4$           & $1$ & $20$  & $10$ & $0.99$  & $50$   & $50$             & $1^{\S}$       & $10^{-5}$ & $10^{-3}$ \\
\bottomrule
\end{tabular}
\\[2pt]
\footnotesize $^{\dagger}$\textsc{Transport} uses $T_a{=}8$. \quad
$^{\ddagger}$\textsc{Transport} is read at iteration $180$ and uses
$N_{\mathrm{warm}}{=}5$. \quad
$^{\S}$M3 uses $N_{\mathrm{warm}}{=}2$. \quad
All runs use $\alpha{=}0.5$ and a $Q$ learning rate of $3\times10^{-4}$ except
\textsc{Kitchen Partial}, which uses $10^{-3}$ (Appendix~\ref{app:tasks-kitchen}).
\end{table*}

\subsection{Robomimic}
\label{app:tasks-robomimic}

We evaluate on four manipulation tasks from the Robomimic~\citep{mandlekar2021what} benchmark, with the multi-human demonstration split and
the state-based observations.  \textsc{Lift} and \textsc{Can} are single-arm pick-and-place, \textsc{Square} is a peg-in-hole insertion with a tight tolerance, and \textsc{Transport} is a bi-manual multi-stage manipulation task. The reward is a per-step binary success indicator. 

\smallskip\noindent\textbf{Randomization.} Object placement is resampled at every
reset by robosuite's uniform randomizer. \textsc{Lift} samples the cube within $\pm3$\,cm in
$x$ and $y$ about the table centre with unconstrained rotation about $z$. \textsc{Can} samples
the object over $\pm14.5$\,cm in $x$ and $\pm19.5$\,cm in $y$ about the source bin's centre,
these being the arena's table half-extents less $5$\,cm, also with unconstrained rotation
about $z$. \textsc{Square} samples the nut over $x\in[-0.115,-0.11]$\,m and
$y\in[0.11,0.225]$\,m with unconstrained rotation about $z$. \textsc{Transport} places all six objects within
$\pm5$\,mm in $x$ and $y$ and rotates two of them: the payload by
$\pm\pi/6$ about $y$, taken about a $\pi/2$ centre, and the trash by $\pm0.3\pi$ about $z$. The two bins,
the lid and the trash bin are not rotated. The arm's initial joint
configuration carries Gaussian noise of magnitude $0.02$.

\smallskip\noindent\textbf{Demonstration data.} Every task is pre-trained on the multi-human (\texttt{mh})
split, $300$ teleoperated demonstrations per task collected by six operators of
differing skill, the data is multi-modal both in strategy and in
proficiency. This is the harder of the two splits Robomimic provides: the
proficient-human split is cleaner but far less diverse, and the multi-human
demonstrations contain the hesitations, retries and suboptimal grasps. On \textsc{Square} the demonstrations
comprise $80{,}731$ transitions over episodes of $123$ to $1051$ steps (median
$239$); on \textsc{Transport}, $195{,}800$ transitions over episodes of $392$ to
$2614$ steps (median $630$).

The observation is a flat proprioceptive-plus-object vector. Each
arm contributes end-effector position ($3$), end-effector orientation as a
quaternion ($4$) and gripper joint positions ($2$), and the remainder is the
task's object state: $10$ dimensions on \textsc{Lift} (the cube's pose and its
pose relative to the gripper), $14$ on \textsc{Can} and \textsc{Square}, and
$41$ on \textsc{Transport}, which also carries two arms worth of
proprioception. Actions are $7$-dimensional operational-space deltas per arm
(position, orientation and gripper), doubled to $14$ on \textsc{Transport}.
Observations and actions are scaled to $[-1,1]$ per dimension by the minimum and
maximum of the demonstration data, applied both to the pre-training data and to
every environment step during fine-tuning.

\subsection{FurnitureBench}
\label{app:tasks-furniture}

\textsc{One-leg} and \textsc{Lamp} are long-horizon assembly tasks from
FurnitureBench~\citep{10.1177/02783649241304789} at medium randomization, simulated in
IsaacGym, with horizons of $700$ and $1000$ environment steps.

\smallskip\noindent\textbf{Randomization.} We use the medium setting. At reset the
parts are placed at fixed poses and then perturbed by random impulses, of force up
to $0.5$ and torque up to $0.01$, drawn independently per part; the obstacle is
displaced by up to $\pm4$\,cm in the table plane, and each of the arm's seven
joints is offset by up to $\pm10^\circ$. 

\smallskip\noindent\textbf{Demonstration data.} The demonstrations are teleoperated assembly
episodes recorded at medium initial-state randomization, which perturbs the
starting poses of the parts on the table. \textsc{One-leg} is pre-trained on
$50$ demonstrations totalling $24{,}091$ transitions, with episodes of $362$ to
$624$ steps (median $478$).

The observation concatenates the robot's proprioceptive state, with its
end-effector orientation converted from a quaternion to a $6$D rotation
representation, with the poses of every furniture part in the scene.
\textsc{One-leg} observes $58$ dimensions and \textsc{Lamp} $44$. Actions are
$10$-dimensional: a $3$D position delta, a $6$D rotation delta and a gripper
command. Observations and actions are scaled to $[-1,1]$ per dimension by the
minimum and maximum of the demonstration data, applied both to the pre-training
data and to every environment step during fine-tuning. 

\subsection{Franka Kitchen}
\label{app:tasks-kitchen}

The three D4RL Kitchen datasets~\citep{fu2020d4rl} share an environment and differ
in their demonstration data and in which subtasks are targeted.
\textsc{Complete} and \textsc{Partial} target the microwave, kettle, light
switch and slide cabinet. \textsc{Mixed} targets the bottom burner in place of the slide
cabinet. The return is the number of subtasks solved, with a maximum of four.

\smallskip\noindent\textbf{Randomization.} The simulator state is the same at every
reset: the arm and every object return to one fixed configuration and the goal is fixed,
so the physical scene never varies. Episodes still differ, because the environment adds
uniform sensor noise to every observation it returns, the first one included. The noise
is $\pm0.01$ on each arm joint position and velocity and on each object dimension, the
product of the environment's noise ratio $0.1$ and the per-joint amplitude $0.1$. Across
resets this moves the observed state by up to $0.019$, and leaves the $30$ goal
dimensions untouched. It is the only source of episode-to-episode variation on this
benchmark.

\smallskip\noindent\textbf{Demonstration data.} The three datasets are the D4RL relay-policy
demonstrations. \textsc{Complete} contains $17$ demonstrations, $3290$
transitions, each a clean episode of $182$ to $207$ steps solving all four
target subtasks in order. \textsc{Partial} and \textsc{Mixed} draw on the same
undirected play dataset of roughly $550$ demonstrations and $123{,}000$
transitions, with episodes of $8$ to $280$ steps. In \textsc{Partial}, $17$
demonstrations complete all four target subtasks, $215$ complete three of the
four and $237$ complete two. In \textsc{Mixed} no demonstration completes all
four, and $233$ complete three. Every demonstration in both datasets completes
at least one target subtask.

The observation is $60$-dimensional: $30$ dimensions of state, being $9$ arm
joint positions and $21$ object joint positions, followed by $30$ encoding the
goal configuration. The seven manipulable elements occupy fixed slices of the
object block, the kettle at indices $23$--$29$ and the microwave at index $22$,
and a subtask counts as solved when its slice comes within a fixed threshold of
the goal. Actions are $9$-dimensional joint velocity commands.
Observations and actions are scaled to $[-1,1]$ per dimension, applied both to
the pre-training data and to every environment step during fine-tuning. The
scaling constants are the per-dimension minimum and maximum of the demonstration
data, so the pre-training data lies in $[-1,1]$ by construction. During
environment interaction an observation can fall outside the demonstration range,
and the scaled value is clipped to $[-10,10]$.

\subsection{D3IL Avoid}
\label{app:tasks-avoid}

\textsc{Avoid}~\citep{jia2024towards} is a planar reaching task in which the end
effector must cross a field of six obstacles to reach a goal line. The three
variants M1, M2 and M3 differ only in their demonstration data. The reward is $+1$
per step once the end effector is past the goal line, with a permanent $-0.1$
on contact with an obstacle or on leaving the arena.

\smallskip\noindent\textbf{Randomization.} None. Reset returns the arm to a
fixed joint configuration and the six obstacles are at fixed positions.

\smallskip\noindent\textbf{Demonstration data.} The demonstration set is $96$ human-teleoperated
trajectories, $76$ for training and $20$ held out, recorded on the physical
setup the simulator replicates, each at most $200$ steps. For every obstacle a
human sometimes passes left and sometimes right, so the set covers many distinct
routes to the goal. M1, M2 and M3 are pre-trained on progressively larger
subsets of these routes.

The observation is four-dimensional, the previous two-dimensional action
followed by the end-effector's $(x,y)$ position, and the action is a
two-dimensional end-effector command. Observations and actions are scaled to
$[-1,1]$ per dimension by the minimum and maximum of the demonstration data,
applied both to the pre-training data and to every environment step during
fine-tuning.

\subsection{Listed training hyperparameters}
\label{app:hyperparams}

\autoref{tab:hyperparams} lists every training hyperparameter of the DIA
runs: the settings shared across all tasks, then the per-task values for the
rollout, the diffusion policy, PPO, the DIA-specific critics and the network
sizes. DPPO uses the same values wherever the two methods share a
hyperparameter.

\begin{table*}[t]
\centering
\caption{Training hyperparameters of the DIA runs. $[n]^{d}$ denotes $d$ hidden layers of width $n$.}
\label{tab:hyperparams}
\footnotesize\renewcommand{\arraystretch}{0.95}
\begin{tabular}{@{}lc@{\hspace{2.2em}}lc@{\hspace{2.2em}}lc@{}}
\toprule
\multicolumn{6}{@{}l}{\textit{Shared across all tasks}} \\
\midrule
Observation history $T_o$ & $1$
  & Value loss coefficient & $0.5$
  & Gaussian sample clip & $3$ \\
GAE $\lambda$ & $0.95$
  & Target KL & $1$
  & Min.\ log-prob.\ denoising std & $0.1$ \\
Inner GAE $\lambda_{\mathrm{in}}$ & $0.95$
  & $\widehat Q$ target update rate & $1.0$
  & $\widehat V_k$ time embedding & $32$ \\
Inner weight $\alpha$ & $0.5$
  & $\widehat Q$ ensemble statistic & mean
  & Running reward scaling & yes \\
Critic learning rate & $10^{-3}$
  & $\widehat V_k$ learning rate & $10^{-3}$
  & & \\
\bottomrule
\end{tabular}

\vspace{1.0em}

\setlength{\tabcolsep}{4pt}
\resizebox{\textwidth}{!}{%
\begin{tabular}{lcccccccc}
\toprule
& \textsc{Lift} & \textsc{Can} & \textsc{Square} & \textsc{Transport}
& \textsc{One-Leg} & \textsc{Lamp} & Kitchen & \textsc{Avoid} \\
\midrule
\multicolumn{9}{l}{\textit{Rollout}} \\
Parallel environments $N_{\mathrm{env}}$ & $50$ & $50$ & $50$ & $50$ & $1000$ & $1000$ & $40$ & $50$ \\
Episode horizon $H$                      & $300$ & $300$ & $400$ & $800$ & $700$ & $1000$ & $280$ & $100$ \\
Policy decisions per environment per iteration$^{\ast}$ & $300$ & $300$ & $400$ & $400$ & $88$ & $125$ & $70$ & $25$ \\
Fine-tuning iterations $N_{\mathrm{itr}}$ & $200$ & $200$ & $201$ & $201$ & $1000$ & $1000$ & $601$ & $51$ \\
Critic warmup iterations $N_{\mathrm{warm}}$ & $2$ & $2$ & $2$ & $5$ & $1$ & $1$ & $0$ & $1$ \\
\midrule
\multicolumn{9}{l}{\textit{Diffusion policy}} \\
Denoising steps $K$                     & $20$ & $20$ & $20$ & $20$ & $100$ & $100$ & $20$ & $20$ \\
Fine-tuned denoising steps $K_{\mathrm{ft}}$ & $10$ & $10$ & $10$ & $10$ & $5$ & $5$ & $10$ & $10$ \\
Action chunk $T_a$                      & $4$ & $4$ & $4$ & $8$ & $8$ & $8$ & $4$ & $4$ \\
Minimum sampling denoising std          & $0.1$ & $0.1$ & $0.1$ & $0.1$ & $0.04$ & $0.04$ & $0.1$ & $0.1$ \\
\midrule
\multicolumn{9}{l}{\textit{PPO}} \\
Minibatch size                          & $7500$ & $7500$ & $10000$ & $10000$ & $8800$ & $8800$ & $5600$ & $6250$ \\
Update epochs                           & $10$ & $10$ & $10$ & $5$ & $5$ & $5$ & $10$ & $10$ \\
Discount $\gamma$                       & $0.999$ & $0.999$ & $0.999$ & $0.999$ & $0.999$ & $0.999$ & $0.99$ & $0.99$ \\
Denoising discount $\gamma_{\mathrm{den}}$ & $0.99$ & $0.99$ & $0.99$ & $0.99$ & $0.9$ & $0.9$ & $0.99$ & $0.95$ \\
Clipping $\epsilon$, base               & $0.001$ & $0.001$ & $0.001$ & $0.001$ & $0.001$ & $0.001$ & $0.01$ & $0.1$ \\
Clipping $\epsilon$, maximum            & $0.01$ & $0.01$ & $0.01$ & $0.01$ & $0.001$ & $0.001$ & $0.01$ & $0.1$ \\
Clipping $\epsilon$, rate               & $3$ & $3$ & $3$ & $3$ & $3$ & $3$ & $3$ & $1$ \\
Actor learning rate                     & $10^{-4}$ & $10^{-4}$ & $10^{-4}$ & $10^{-4}$ & $10^{-5}$ & $10^{-5}$ & $10^{-4}$ & $10^{-5}$ \\
\midrule
\multicolumn{9}{l}{\textit{DIA}} \\
$\widehat Q$ learning rate              & $3{\times}10^{-4}$ & $3{\times}10^{-4}$ & $3{\times}10^{-4}$ & $3{\times}10^{-4}$ & $3{\times}10^{-4}$ & $3{\times}10^{-4}$ & $3{\times}10^{-4\,\dagger}$ & $3{\times}10^{-4}$ \\
$\widehat V_k$ update epochs            & $10$ & $10$ & $10$ & $10$ & $5$ & $5$ & $10$ & $10$ \\
$\widehat Q$ ensemble heads $N$         & $2$ & $2$ & $2$ & $2$ & $2$ & $2$ & $10$ & $2$ \\
\midrule
\multicolumn{9}{l}{\textit{Networks}} \\
Actor MLP                               & $[512]^{3}$ & $[512]^{3}$ & $[1024]^{3}$ & $[1024]^{3}$ & $[1024]^{7}$ & $[1024]^{7}$ & $[256]^{3}$ & $[512]^{3}$ \\
Actor time embedding                    & $16$ & $16$ & $32$ & $32$ & $32$ & $32$ & $16$ & $16$ \\
$\widehat V$ MLP                        & $[256]^{3}$ & $[256]^{3}$ & $[256]^{3}$ & $[256]^{3}$ & $[512]^{3}$ & $[512]^{3}$ & $[256]^{3}$ & $[256]^{3}$ \\
$\widehat Q$ MLP                        & $[256]^{3}$ & $[256]^{3}$ & $[256]^{3}$ & $[256]^{3}$ & $[512]^{3}$ & $[512]^{3}$ & $[256]^{3}$ & $[256]^{3}$ \\
$\widehat V_k$ MLP                      & $[256]^{3}$ & $[256]^{3}$ & $[256]^{3}$ & $[256]^{3}$ & $[512]^{3}$ & $[512]^{3}$ & $[256]^{3}$ & $[256]^{3}$ \\
\bottomrule
\end{tabular}}
\\[2pt]
\footnotesize $^{\ast}$Environment steps collected per iteration are this value
times $T_a$ times $N_{\mathrm{env}}$. \quad
$^{\dagger}$\textsc{Partial} uses $10^{-3}$; \textsc{Complete} and \textsc{Mixed}
use $3\times10^{-4}$.
\end{table*}

\end{document}